\newcommand{\figref}[1]{Fig.~\ref{#1}}
\newcommand{\secref}[1]{Sect.~\ref{#1}} 
\newcommand{\eqnref}[1]{Eq.~\ref{#1}} 
\newcommand{\algref}[1]{Algorithm~\ref{#1}} 
\DeclarePairedDelimiter{\abs}{\lvert}{\rvert}
\DeclareMathAlphabet{\mathcal}{OMS}{cmsy}{m}{n}
\newacronym{FOR}{f.o.r.}  {frame of reference}
\newacronym{GT}{GT}   {Ground-Truth}
\newacronym{FFT}{FFT} {Fast Fourier Transform}
\newacronym{FT}{FT}   {Fourier Transform}
\newacronym{1D}{1D}   {1-Dimensional}
\newacronym{2D}{2D}   {2-Dimensional}
\newacronym{3D}{3D}   {3-Dimensional}
\newacronym{PAM}{PAM} {Prospecting Asteroid Mission}
\newacronym{LARA}{LARA} {Lander Amorphous Rover Antenna}
\newacronym{MSL}{MSL} {Mean Sea Level}
\newacronym{DOF}{DOF} {Degrees of Freedom}
\newacronym{CG}{CG}   {Center of Gravity}
\newacronym{NED}{NED} {North-East-Down}
\newacronym{ER}{ER}   {Evolutionary Robotics}
\newacronym{AL}{AL}   {Artificial Life}
\newacronym{AI}{AI}   {Artificial Intelligence}
\newacronym{GA}{GA}   {Genetic Algorithm}
\newacronym{FSM}{FSM} {Finite State Machine}
\newacronym{PFSM}{PFSM} {Probabilistic Finite State Machine}
\newacronym{NN}{NN}   {Neural Network}
\newacronym{ANN}{ANN} {Artificial Neural Network}
\newacronym{RNN}{RNN} {Recurrent Neural Network}
\newacronym{FNN}{FNN} {Feed-forward Neural Network}
\newacronym{BT}{BT}   {Behavior Tree}
\newacronym{MDP}{MDP}   {Markov Decision Process}
\newacronym{BDD}{BDD}   {Binary Decision Diagram}
\newacronym{LTL}{LTL}   {Linear Temporal Logic}
\newacronym{CTL}{CTL}   {Computation Tree Logic}
\newacronym{PCTL}{PCTL} {Probabilistic Computation Tree Logic}
\newacronym{RL}{RL}   {Reinforcement Learning}
\newacronym{SMC}{SMC} {Sensory-Motor Coordination}
\newacronym{LS}{LS}   {Least-Squares}
\newacronym{RLS}{RLS} {Recursive Least-Squares}
\newacronym{KF}{KF}   {Kalman Filter}
\newacronym{EKF}{EKF} {Extended Kalman Filter}
\newacronym{UKF}{UKF} {Unscented Kalman Filter}
\newacronym{PF}{PF}   {Particle Filter}
\newacronym{IAEKF}{IAEKF}{Iterative Adaptive Extended Kalman Filter}
\newacronym{KCF}{KCF} {Kalman Consensus Filter}
\newacronym{LPF}{LPF} {Low-Pass Filter}
\newacronym{BPF}{BPF} {Band-Pass Filter}
\newacronym{HPF}{HPF} {High-Pass Filter}
\newacronym{MAF}{MAF} {Moving Average Filter}
\newacronym{CRR}{CRR} {Conflict Resolution Rate}
\newacronym{RMSE}{RMSE} {Root Mean Squared Error}
\newacronym{ZMGN}{ZMGN} {Zero-Mean Gaussian Noise}
\newacronym{GPS}{GPS} {Global Positioning System}
\newacronym{IR}{IR}   {Infra-Red}
\newacronym{IMU}{IMU} {Inertial Measurement Unit}
\newacronym{SLAM}{SLAM} {Simultaneous Localization And Mapping}
\newacronym{AOA}{AOA} {Angle of Arrival}
\newacronym{TOA}{TOA} {Time of Arrival}
\newacronym{TDOA}{TDOA} {Time Difference of Arrival}
\newacronym{RTOA}{RTOA} {Round-trip Time of Arrival}
\newacronym{WSN}{WSN} {Wireless Sensor Network}
\newacronym{WLAN}{WLAN} {Wireless Local Area Network}
\newacronym{RSS}{RSS} {Received Signal Strength}
\newacronym{RSSI}{RSSI} {Received Signal Strength Indication}
\newacronym{FSL}{FSL} {Free Space Loss}
\newacronym{BLE}{BLE} {Bluetooth Low Energy}
\newacronym{PE}{PE} {Potential Energy}
\newacronym{KE}{KE} {Kinetic Energy}
\newacronym{GRPR}{GRPR} {Golden Receiver Power Range}
\newacronym{ISM}{ISM} {Industrial, Scientific and Medical}
\newacronym{AP}{AP}   {Access Point}
\newacronym{MAC}{MAC} {Media Access Control}
\newacronym{IoT}{IoT} {Internet of Things}
\newacronym{LD}{LD}   {Log-Distance}
\newacronym{LQI}{LQI} {Link Quality Indicator}
\newacronym{SQC}{SQC} {Sum Quadratic Constraint}
\newacronym{RANSAC}{RANSAC}{RANdom SAmpling and Consensus}
\newacronym{RGB}{RGB} {Red-Green-Blue}
\newacronym{LED}{LED} {Light-Emitting Diode}
\newacronym{LoG}{LoG} {Laplacian of Gaussian}
\newacronym{SIFT}{SIFT} {Scale-Invariant Feature Transform}
\newacronym{SURF}{SURF} {Speeded-Up Robust Feature}
\newacronym{OF}{OF}   {Optical Flow}
\newacronym{FAST}{FAST} {Features from Accelerated Segment Test}
\newacronym{CenSurE}{CenSurE}{Center Surround Extremas for Realtime Feature Detection and Matching}
\newacronym{CC}{CC}   {Collision Cone}
\newacronym{VO}{VO}   {Velocity Obstacle}
\newacronym{RVO}{RVO} {Reciprocal Velocity Obstacle}
\newacronym{HRVO}{HRVO} {Hybrid Reciprocal Velocity Obstacle}
\newacronym{ORCA}{ORCA} {Optimal Reciprocal Collision Avoidance}
\newacronym{HL}{HL}   {Human-Like}
\newacronym{CALU}{CALU} {Collision Avoidance under Localization Uncertainty}
\newacronym{COCALU}{COCALU}{Convex Outline Collision Avoidance under Localization Uncertainty}
\newacronym{ROS}{ROS} {Robotics Operating System}
\newacronym{SIDPAC}{SIDPAC}{System Identification Programs for Aircraft}
\newacronym{STDMA}{STDMA}{Self-Organized Time Division Multiple Access}
\newacronym{UAV}{UAV} {Unmanned Air Vehicle}
\newacronym{MAV}{MAV} {Micro Air Vehicle}
\newacronym{LOFAR}{LOFAR}{LOw Frequency ARray}
\newacronym{OLFAR}{OLFAR}{Orbiting Low Frequency Array for Radio astronomy}
\newacronym{ANTS}{ANTS} {Autonomous Nano-Technology Swarm}
\newacronym{EDSN}{EDSN} {Edison Demonstration of Small-sat Networks}
\newacronym{COTS}{COTS} {Commercial Off-The-Shelf}
\newacronym{SI}{SI}   {Swarm Intelligence}
\newacronym{FP}{FP}   {Fokker-Planck}
\newacronym{IBDP}{IBDP} {Integral Birth-Death Process}
\newacronym{SPP}{SPP} {Self-Propelled Particles}
\newacronym{SSN}{SSN} {Swarm-Signaling Network}
\newacronym{TNI}{TNI} {Topological Neighborhood of Interaction}
\newacronym{ACO}{ACO} {Ant Colony Optimization}
\newacronym{PSO}{PSO} {Particle Swarm Optimization}
\newacronym{NASA}{NASA} {National Aeronautics and Space Administration}
\newacronym{ESA}{ESA} {European Space Agency}
\newacronym{JAXA}{JAXA} {Japan Aerospace Exploration Agency}
\newacronym{LEO}{LEO} {Low Earth Orbit}
\newacronym{GEO}{GEO} {Geosynchronous Earth Orbit}
\newacronym{AU}{AU}   {Astronomical Unit}
\newacronym{mDOT}{mDOT}  {Miniaturized Distributed Occulter/Telescope}
\newacronym{ARGoS}{ARGoS}{Autonomous Robots Go Swarming}
\newtheorem{theorem}{Theorem}
\newtheorem{lemma}{Lemma}
\newtheorem{proposition}{Proposition}
\newtheorem{definition}{Definition}
\begin{document}

\title{Provable Emergent Pattern Formation by a Swarm of Anonymous, Homogeneous, Non-Communicating, Reactive Robots with Limited Relative Sensing and no Global Knowledge or Positioning}


\author{M. Coppola}
\author{J. Guo}
\author{E.K.A. Gill}
\author{G.C.H.E. de Croon}
\affil{Delft University of Technology, Faculty of Aerospace Engineering, \\Kluyverweg 1, 2629 HS Delft, The Netherlands}

\date{}

\maketitle



\begin{abstract}
  In this work, we explore emergent behaviors by swarms of anonymous, homogeneous, non-communicating, reactive robots that do not know their global position and have limited relative sensing.
  We introduce a novel method that enables such severely limited robots to autonomously arrange in a desired pattern and maintain it.
  The method includes an automatic proof procedure to check whether a given pattern will be achieved by the swarm from any initial configuration.
  An attractive feature of this proof procedure is that it is local in nature, avoiding as much as possible the computational explosion that can be expected with increasing robots, states, and action possibilities.
  Our approach is based on extracting the local states that constitute a global goal (in this case, a pattern).
  We then formally show that these local states can only coexist when the global desired pattern is achieved and that, until this occurs, there is always a sequence of actions that will lead from the current pattern to the desired pattern.
  Furthermore, we show that the agents will never perform actions that could a) lead to intra-swarm collisions or b) cause the swarm to separate.
  After an analysis of the performance of pattern formation in the discrete domain, we also test the system in continuous time and space simulations and reproduce the results using asynchronous agents operating in unbounded space.
  The agents successfully form the desired patterns while avoiding collisions and separation.
\end{abstract}

\section{Introduction}
\label{sec:introduction}
  Swarm intelligence enables several robots to collaborate, in a distributed fashion, towards achieving a common goal \citep{sahin2008swarm}.
  Each robot (or \emph{agent}) acts independently based on its local perception of the environment.
  The goal is achieved via the combined (inter-)actions of all agents \citep{navarro2012introduction}.
  The challenge is to develop controllers such that the swarm, despite the limited capabilities of each of its individuals, will 
  achieve the global goal (liveness property) and
  avoid undesired situations (safety property) 
  \citep{winfield2005towards}. \\

  In this paper, we focus on achieving these properties for the problem of pattern formation by a swarm of fully homogeneous, anonymous, and reactive robots.
  Homogeneous means that all robots are identical.
  Anonymous means that the robots do not know each other's identities (they cannot tell who is who).
  Reactive means that the robots react only based on their current perception of the world.
  Furthermore, we impose that each robot can only sense the environment in a narrow range around itself, and thus has only partial observability of the global state of the swarm.
  The agents also cannot communicate between each other and do not have any global positioning information.
  Finally, they hold no knowledge on the number of agents in the swarm or what the global goal actually is.
  Despite these extremely limited agents, we present a method that enables them to arrange into a desired pattern. \\

  The contribution of this article to the body of knowledge is a method to define local agent behaviors such that a highly limited swarm of agents can achieve a global pattern, with a proof procedure to check that it will achieve this from any initial configuration.
  The method is based on defining a set of desired local states 
  (as observable by an agent, directly based on its sensors) 
  that can coexist \emph{if and only if} the desired pattern is achieved.
  When an agent is in any other state, it will try to move about its neighbors in search of a desired state.
  With this approach, it is possible to guarantee that the swarm will reshuffle from any initial configuration into the desired pattern.
  Our proof of convergence is of a \emph{local} nature;
  it analyzes the role of agents within the swarm and the actions that they can take.
  The local proof is more restrictive than a global proof, but it avoids the computational explosion of a global analysis and does not make any statistical assumptions about the state of the agents.
  With the proposed method, one can show that the emergent behavior of a swarm is predictable, provided that its constituent local states and actions are properly defined.
  Additionally, the behavior can also guarantee that the agents remain aggregated and never collide (and thus satisfy the safety property).
  We validate the approach in a discrete environment and then export and test it in a simulation environment with continuous time, continuous space, and fully asynchronous agents. \\

  This paper is organized as follows.
  In \secref{sec:relatedwork}, we review prior approaches to distributed robot control with a focus on pattern formation, and we explain the context of this approach in the field.
  Following this, we define the problem and its key aspects in \secref{sec:problemdefinition}.
  The methodology, which includes the local proof, is then detailed in \secref{sec:frameworkdefinition}.
  The local proof requires that the desired pattern is the only pattern that can emerge.
  Our implementation to verify this is presented in \secref{sec:implementation}.
  The performance of the system is then tested for a set of representative patterns, first in an idealized discrete world (\secref{sec:gridsimulations}) and then in a continuous world (\secref{sec:swarmulatorsimulations}).
  The results and insights gathered are further discussed in \secref{sec:discussion}.
  Finally, \secref{sec:conclusion} provides concluding remarks and future research that can follow up on methodology and results in this paper.

\section{Related work and research context}
\label{sec:relatedwork}
  Distributed pattern formation and spatial organization is a branch of swarm robotics with applications in 
  aerial robots \citep{saska2016swarm, achtelik2012sfly},
  underwater robots \citep{joordens2010consensus},
  satellites \citep{engelen2011systems, verhoeven2011ontheorigin},
  planetary rovers \citep{kisdi2011future},
  and entertainment \citep{alonso2014collaborative}.\\

  One approach is to use a centralized omniscient controller that plans the path of every agent \citep{alonso2014collaborative}.
  This is efficient but requires external infrastructure, namely: 
  \begin{inparaenum}[1)]
    \item a global localization method and
    \item an external computer capable of communicating with the agents.
  \end{inparaenum}
  Distributed approaches aim to achieve the same result without the central controller.
  If a global localization method is still available, the agents can use their global position as a guide towards target locations \citep{hou2009multiplicative, morgan2015swarm, dada2013novel, gold2000utility}.
  However, for a swarm to be independent of external infrastructure, there is a need for algorithms that solely require on-board relative sensors. \\

  In a best case scenario, each robot can directly sense every other robot, in which case the swarm is \emph{fully connected} \citep{bouffanais2016network}.
  Several works operate under this assumption with valuable results.
  \cite{gazi2004stability} proved that a fully connected swarm can reach a stable formation for certain classes of attraction and repulsion functions.
  Such attraction and repulsion functions can also be manipulated to obtain certain patterns \citep{izzo2005equilibrium}.
  Alternatively, \cite{suzuki1999distributed}, \cite{flocchini2008arbitrary}, \cite{fujinaga2015pattern}, and \cite{guzel2017adaptive} devised explicit algorithms for arbitrary pattern formation.
  \cite{izzo2014evolutionary} and \cite{scheper2016abstraction} investigated the use of neural networks for asymmetrical pattern formation.
  \cite{yamauchi2014randomized} developed an algorithm that can replicate patterns following a leader election.
  \cite{pereira2008adaptive} and \cite{demarina2016distributed} used formation control algorithms.\\

  Unfortunately, the assumption that a swarm is fully connected does not hold for all applications.
  For instance, if robots sense each other with on-board cameras, they might be unable to see behind a nearby agent or beyond a certain distance.
  In this case, the swarm is \emph{connected} (agents can sense each other) but not fully connected.
  \emph{Sensing topology} is the graph of how agents in a swarm sense each other.
  \cite{tanner2004controllability} showed how certain sensing topologies can endow one agent with (indirect) control over all agents, and used this to manipulate the swarm to form patterns, under the assumption that the sensing topology is fixed.
  In swarms, however, the topology is likely not fixed, but changing depending on how the agents are positioned at any given time.
  It is then impossible for any agent to take advantage (or know about) its control centrality. \\

  Defining a functional hierarchy in the swarm can help to artificially endow certain agents with control centrality (regardless of the sensing topology), or to make agents act as ``seeds'' so that the other agents can use them as reference points.
  \cite{rubenstein2014programmable} notably used this approach to create patterns using a swarm of one thousand robots.
  Four seed agents acted as a static reference point for all other agents.
  Other methods that use seed/leader agents can be found in the works of 
  \cite{khaledyan2017formation}, 
  \cite{cicerone2016asynchronous}, 
  \cite{hasan2018circle}, and 
  \cite{wang2017pattern}.
  Furthermore, 
  \cite{derakhshandeh2016universal}, 
  \cite{diluna2017shape}, and 
  \cite{yamauchi2014randomized} explored autonomous leader election algorithms to avoid manually defining leaders.
  However, using leader/seed agents means that the other agents need to identify them as such, which cannot happen when the agents are anonymous and/or the conditions are such that they cannot all communicate exhaustively in order to agree on a leader.\\

  With a focus on self-assembly applications, \cite{klavins2002automatic} proposed a strategy for homogeneous and anonymous agents using graph structures.
  Robots could randomly move in an environment and latch together upon encounter.
  Based on a set of instructions, they could stick together if the latching matched a sub-element of the structure, or else detach.
  Over time, the agents would latch into the final assembly.
  Other similar works include the work of
  \cite{arbuckle2010selfassembly},
  \cite{arbuckle2012issues},
  \cite{klavins2007programmable},
  \cite{fox2015probabilistic},  and
  \cite{haghighat2017automatic}.
    These schemes assume that the agents can drift randomly in an enclosed area, and that by doing so they will eventually meet each other, at which point they will latch together.
  Once latched, the agents communicate in order to determine whether they should remain attached or whether they should detach and continue drifting until a new attachment is made. \\

  In this work we focus on a minimalist swarm and remove all assumptions to the maximum extent possible.
  We assume that each agent only has information about the relative location of its closest neighbors, and that they cannot afford to separate from the swarm as they operate in an unbounded environment.
  To our knowledge, there exists little research with such limited cases.
  \cite{krishnanand2005formations} explored using local information in order to align robots at specific bearings to each other, forming infinite grids or lines.
  \cite{flocchini2006gathering} explored the gathering problem, which is a special case of pattern formation where all robots have to gather towards the same location.
  \cite{yamauchi2013pattern} examined the formation power of very limited agents and how this related to the initial condition by comparing the symmetricities of the patterns. \\

  In this article, we present a general approach to design the local behaviors of robots such that they can form an arbitrary pattern, and we present a proof procedure to check that this can be achieved from any arbitrary initial configuration.
  This is shown to work for asynchronous agents and in an unbounded environment.
  The swarm can be proven to have the safety property (it never separates or experiences intra-swarm collisions) and the liveness property (the goal is eventually achieved).
  Based on their on-board sensors and actuators, the robots have a set of possible states that they can observe and actions that they can take.
  The state-space and the action-space of the agents are discretized, enabling a formal analysis of the system.
  This idea was introduced by \cite{winfield2005formal} and later explored by \cite{dixon2012towards} and \cite{gjondrekaj2012towards} using temporal logic paradigms, by explicitly defining the global states of the swarm in a formal environment.
  Using model checking techniques \citep{clarke1992model}, the idea behind these works is to verify emergent properties of the swarm by studying all its global states.
  However, as the size of the swarm grows, checking all global states leads to a computational explosion \citep{dixon2012towards}.
  This was tackled by \cite{konur2012analysing} with the use of macroscopic swarm models.
  Macroscopic model use statistics to model the behavior of the entire swarm, yet this bears the disadvantage that the analysis is only as good as the statistical approximation and the validity of its assumptions, which are not always applicable \citep{lerman2001macroscopic}.
  Therefore, in this work, we instead focus on a proof based on the \emph{local} states and actions of the agents.
  This has the advantage that it is independent of the size of the swarm and mitigates computational explosion, but it also does not make any statistical assumptions about the swarm.
  Using this approach, we can formally guarantee that the swarm remains aggregated, no collisions occur, and that there is always a sequence of actions that will lead it to form the global desired pattern.
  The global pattern is defined as a set of local states that build up the global state.
  Asynchronicity is tackled by allowing agents to move if and only if their neighbors are not moving.

\section{Problem Definition}
\label{sec:problemdefinition}
  In this work, the global emergent behavior of the swarm is forming a pattern.
  The goal of the swarm is to shuffle into the desired pattern and hold it despite none of the agents explicitly knowing that this is the global goal that they are trying to achieve, or being able to observe the global pattern.
  The swarm and its agents are under the following constraints:
    \renewcommand{\labelenumi}{C\theenumi:}
    \begin{enumerate*}
      \item The swarm is comprised of homogeneous agents (all agents are identical); \label{c:homogeneous}
      \item The agents are anonymous (they cannot know each other's ID); \label{c:anonymous}
      \item The agents are reactive (they only act based on their current state);
      \item The swarm is leaderless;
      \item The agents cannot communicate with each other;
      \item The agents make decisions locally (i.e., on-board);
      \item The agents do not know their global position;
      \item The agents exist in an unbounded space;
      \item Each agent can only sense the relative location of neighboring agents that are closer than a minimum range and within a field of view as allowed by their sensors.
      \label{c:rmax}
    \end{enumerate*}

    Furthermore, throughout this work, we make the following assumptions.
    \renewcommand{\labelenumi}{A\theenumi:}
    \begin{enumerate*}
      \item At the initial condition, the sensing topology of the swarm forms a connected graph; \label{a:connected}
      \item The agents all have knowledge of a common direction (e.g., North) and are programmed to act with respect to it;
      \footnote{
      On real robots, a common direction can be known using on-board sensors such as a magnetic sensor \citep{oh2015survey}.
      } \label{a:north}
      \item The agents exist and operate on a two-dimensional plane; \label{a:2d}
      \item When an agent senses the relative position of a neighbor, it can sense it with enough accuracy/frequency to establish if a neighbor is executing an action.
      \label{a:canimove}
    \end{enumerate*}
    \renewcommand{\labelenumi}{\theenumi.}
    \emph{Note}: Assumptions A\ref{a:north} and A\ref{a:2d} are not necessary.
    However, as will be shown by our framework, without Assumptions A\ref{a:north} the patterns that a swarm can generate become intrinsically limited due to the fact that the agents are unable to differentiate between certain states (this is further discussed in \secref{sec:discussion_thenorthdependency}).
    Assumption A\ref{a:2d} simplifies the analysis performed throughout the paper, but we expect our methods to also be extendable to three dimensions.
    Assumption A\ref{a:canimove} could also be challenged, but it will be shown that it is an important property of the robots if safe behavior is expected.
    The only assumption that cannot be removed is Assumption A\ref{a:connected}.
    This is because if a swarm does not start in a connected state (but, for instance, separated into two disconnected groups that cannot sense each other) then it cannot ever be expected for the groups to find each other in an unbounded space.

\section{Method Description}
\label{sec:frameworkdefinition}
  Each agent in a swarm can measure the relative location of its neighbors \textemdash{ }this forms the \emph{local state} of the agent.
  The principal idea is to extract the set of local states that the agents are in when the global pattern is achieved.
  These are the \emph{desired local states} of the agents.
  Being (or not being) in a desired local state can tell an agent something about the state of the whole swarm, similarly to how a puzzle piece tells something about a puzzle.
  As a simple example, if a swarm of 1000 robots had to form a line, and one agent sensed that it was surrounded by agents at all sides, then it could infer that the line has not yet been achieved (even if it is not able to sense all other agents).
  It could then move to amend the situation.
  Otherwise, it would remain where it is because from its perspective the global goal has been achieved.
  With this idea, we will show that just by informing the agents of a set of desired local states, we can cause them to reshuffle into the global pattern.\\

  To ensure that the desired pattern is the only emergent possibility, we need to check how the desired local states can coexist and see if the desired pattern is the only solution.
  If this is not the case, then we know that the agents are under-informed \textemdash{ }their sensors are insufficient to guarantee the pattern.
  This would require upgrading the sensors to, for instance, sense at a further range.
  If we know that the final pattern is unique, we can then analyze the local behavior of the agents to determine whether the pattern will be achieved from any initial condition.
  To conduct our analysis, we formally describe the sensory perception and the action capabilities of the agents.
  In doing so, we create a discrete description of what the robot can sense about its environment (local state space), and how it can move in this environment (local action space).

  \subsection{Local Sensor Layout and State Space Definition}
  \label{sec:localstatespace}
    With a sensor, a robot is expected to be capable of measuring the relative location of its neighbors. 
    In order to set up a formal framework, we formalize the sensor readings according a robot's sensor and its interplay with the expected inter-robot equilibrium distances (which may result, for instance, from attraction and repulsion forces).
    With this, we define the \textbf{sensor layout} of the robots.

    \begin{definition}
    	\label{definition:sensorpattern}
    	The \textbf{sensor layout} of a robot $\mathcal{R}$ is a boolean array $L$ of length $d_s$, i.e., $L_\mathcal{R} = \begin{bmatrix} l_1 & ... & l_{d_s} \end{bmatrix}$ arranged in space about the robot's frame of reference.
      The array specifies, from the perspective of robot $\mathcal{R}$, whether a neighbor is located at each relative position $l_i$ that is sensed 
    	($l_i=1$ if a neighbor is sensed at $l_i$, else $l_i=0$).
    \end{definition}

    \begin{definition}
    	\label{definition:link}
    	A \textbf{link} is an element $l_i=1$ in the sensory layout $L$, which indicates that a neighbor is sensed (``linked'') at that position.
    \end{definition}

    \begin{definition}
      \label{definition:localstate}
      A \textbf{local state} $s$ is a realization of a sensor layout $L$ based on whether a neighbor is sensed or not at each link $l_i$ in $L$.
    \end{definition}

    \figref{fig:state_space_design_options} shows a swarm of robots at equilibrium distances to each other and how this relates to different sensor layouts and the state of the robot. 
    Cases 1 and 2 feature omni-directional sensors which can sense at different ranges.
    Case 3 features a directional sensor (e.g., a camera).
    In Example Case 1, the local state of the agent is $s = \begin{bmatrix} l_1 & l_2 & l_3 & l_4 \end{bmatrix} = \begin{bmatrix} 0 & 0 & 1 & 0 \end{bmatrix}$, as also visually depicted in the figure.
    We then define the local state space $\mathcal{S}$ as the set of all possible local states that an agent can be in.
    It follows that $\mathcal{S}$ consists of all possible realizations of $L$, such that $|\mathcal{S}|=2^{d_s}$.
    In this paper, it is assumed that the swarm begins in a connected state (Assumption \ref{a:connected}) and we also show that the swarm never disconnects; 
    therefore, we eliminate the null state $s = \mathbf{0}^\mathsf{T}$ from $\mathcal{S}$, such that $|\mathcal{S}|=2^{d_s}-1$.
    Furthermore, for representation purposes, we will focus on the case where robots have an omni-directional sensor as seen in Case 1 and 2, although the methods in the paper can be extended to other sensor layouts. \\

    \begin{figure}[ht!]
      \centering
        \includegraphics[width=\textwidth]{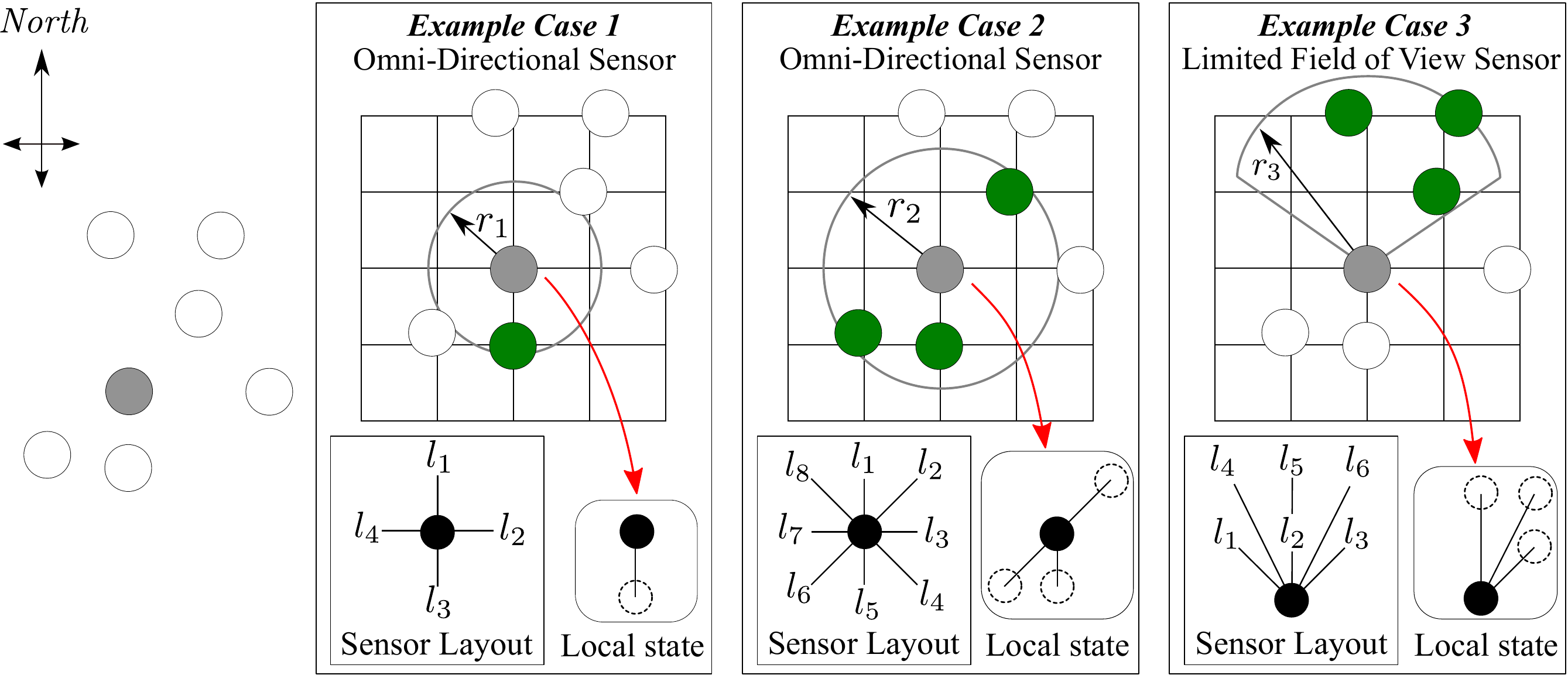}
      \caption{Examples of three different sensor layouts and the resulting local state of an agent in the swarm. Notice that the agents exist in continuous space, but their presence is discretized to the closest grid point.}
      \label{fig:state_space_design_options}
    \end{figure}
  	
  	Local states of neighboring robots must be able to coexist.
  	If robot $\mathcal{R}_1$ sees robot $\mathcal{R}_2$, then it follows that  robot $\mathcal{R}_2$ should also see robot $\mathcal{R}_1$ (if the two are both in each other's field of view).
  	Then, if robot $\mathcal{R}_1$ sees a third robot $\mathcal{R}_3$ at a position where $\mathcal{R}_2$ should also be able to see robot $\mathcal{R}_3$, then it follows that robot $\mathcal{R}_2$ should also see the third robot $\mathcal{R}_3$, and so on.
  	If this is respected, then we say that the local states of robots \textbf{match}. 
  	Examples of local states that do not match and states that match are visualized in \figref{fig:matching_states}.
  	\begin{definition}
      \label{def:match}
  		Two (or more) local states \textbf{match} when they do not have conflicting information about the relative location of their neighbors and each other.
  	\end{definition}

    \begin{figure}[ht!]
      \centering
        \includegraphics[width=0.5\textwidth]{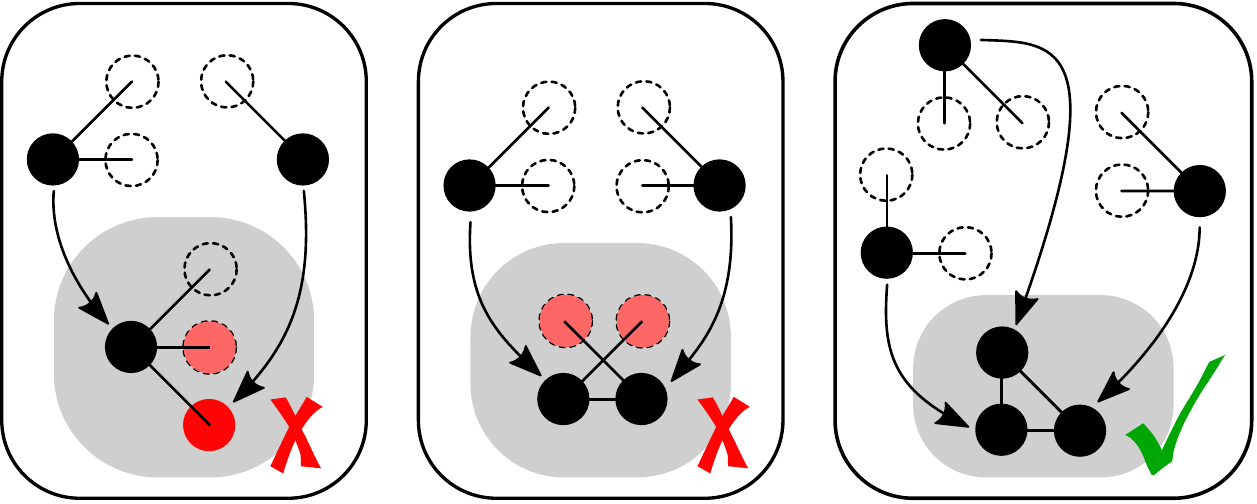}
      \caption{Examples of local states that do not match (left, center) and states that match (right). The sensor layout from Case 2 in \figref{fig:state_space_design_options} is used.}
      \label{fig:matching_states}
    \end{figure}
  	
  \subsection{Action Space Definition}
  \label{sec:localactionspace}
    An action is a motion that the agent can perform in space.
    Similarly as to the state space, we discretize actions with respect to an egocentric frame of reference.
    Let $\mathcal{A}$ be the action space, which is dependent on the actuators available and the degrees of freedom of the robots.
    As illustrated in \figref{fig:action_space_design_options}, a robot that can move in all directions, such as quad-rotors or certain ground robots, would be described with an omni-directional action space. 
    A more limited robot could be described with a constrained action space.

    \begin{figure}[ht!]
      \centering
        \includegraphics[width=0.6\textwidth]{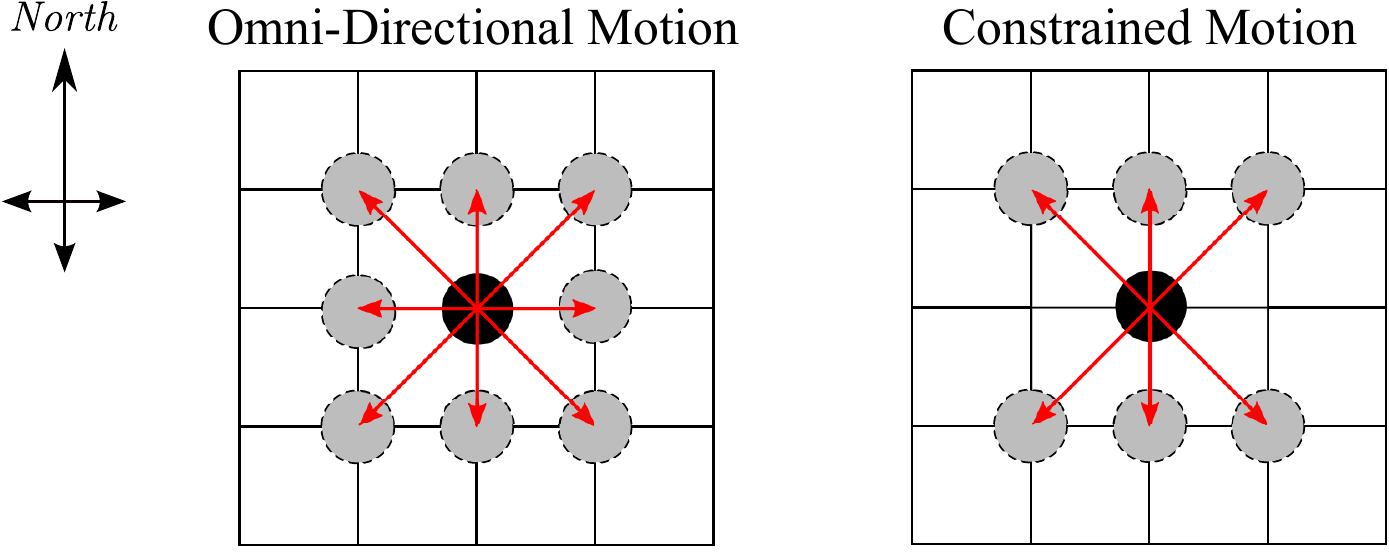}
      \caption{Examples of possible action spaces}
      \label{fig:action_space_design_options}
    \end{figure}

  \subsection{Determining the Desired Local States Needed to Achieve a Pattern}
  \label{sec:desiredstates}
    Based on a desired global pattern and a given sensor layout, we can extract the local states that the agents are in when a desired pattern is composed.
    These states are referred to as the desired states, and are grouped under the set $\mathcal{S}_{des}$, where $\mathcal{S}_{des}\subseteq\mathcal{S}$.
    This process is analogous to extracting the puzzle pieces that create a puzzle.  
    In the general case, the size of set $\mathcal{S}_{des}$ does not need to be equal to the number of agents in the swarm $N$.
    Any number of agents can be assigned to any set $\mathcal{S}_{des}$.
    The patterns that can be formed stem from any possible matching of $N$ of the states in $\mathcal{S}_{des}$, with repetition.
    For any set $\mathcal{S}_{des}$ and a swarm consisting of a fixed number of agents $N$, we thus have one of four possible outcomes:
    \begin{enumerate*}
      \item \emph{No pattern is possible}: $N$ instances of the states in $\mathcal{S}_{des}$ do not match in any way.
      \item \emph{Only undesired patterns are possible}:
        it is impossible to settle in the desired pattern but other patterns are possible.
      \item \emph{Desired pattern is possible}:
        it is possible to settle in the desired pattern but other patterns are also possible.
      \item \emph{Desired pattern is possible and unique}:
        it is only possible to settle in the desired pattern.
    \end{enumerate*}
    \figref{fig:examples_outcomes} visually shows the four outcomes for different sets of $\mathcal{S}_{des}$, a swarm of 4 agents, and a specific sensor layout.
    Each of the 4 agents in the swarm can be any of the states in $\mathcal{S}_{des}$.
    In this example we deal with a small swarm, making it is possible to visually extract possible patterns.
    However, as the size of the swarm and the size of $\mathcal{S}_{des}$ grows, there is a need for an automatic checker.
    Our implementation of this is detailed in \secref{sec:implementation}.

    \begin{figure}[ht!]
      \centering
        \includegraphics[width=\textwidth]{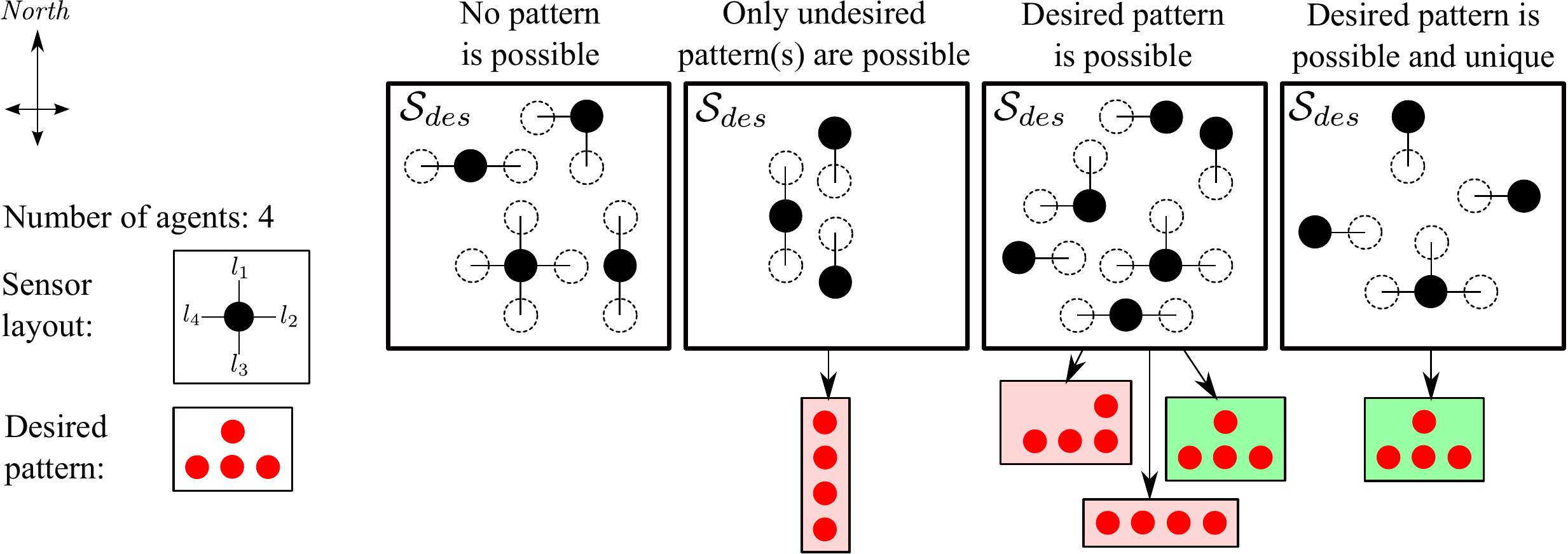}
        \caption{Examples of outcomes for different sets $\mathcal{S}_{des}$ by a swarm of 4 agents using the sensor layout from Case 1 in \figref{fig:state_space_design_options}}
        \label{fig:examples_outcomes}
    \end{figure}

  \subsection{Defining a Safe State-Action Relation}
  \label{sec:stateaction_space}
    In this section, we wish to develop a state-action relation such that a connected swarm remains \textbf{safe} at all times, as defined in Definition \ref{def:safe}.
    \renewcommand{\labelenumi}{\theenumi)}
    \begin{definition}
    	\label{def:safe}
    	A connected swarm remains \textbf{safe} if neither of the following events occur:
    	\begin{inparaenum}
    	\item a collision between two or more agents, 
    	\item the swarm disconnects.
    	\end{inparaenum}
    \end{definition}

    Our swarm consists of several asynchronous agents that can choose to take actions at any point in time.
    Safety can be guaranteed when agents do not simultaneously perform conflicting actions.
    To formalize this, we bring forward Proposition \ref{proposition:oneatatime}.

    \begin{proposition}
      \label{proposition:oneatatime}
      If the swarm never features more than one agent moving at the same time, then the swarm can remain safe.
    \end{proposition}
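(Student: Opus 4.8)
The plan is to unfold \textbf{safe} (Definition~\ref{def:safe}) into its two constituent requirements — absence of intra-swarm collisions and preservation of connectivity — and verify each one under the single hypothesis that at most one agent is ever in motion. The leverage that this hypothesis provides is a form of \emph{serialization}: even though the agents are asynchronous and time is continuous, if motions never overlap then the evolution of the swarm can be described as a sequence of moves, each performed by one agent $\mathcal{R}$ while every other agent is stationary. Consequently the sensor layout that $\mathcal{R}$ reads immediately before committing to an action is an accurate description of its surroundings throughout the action, up to $\mathcal{R}$'s own displacement: no other agent will have shifted in the meantime. I would make this the backbone of the argument, then treat the two safety conditions separately by induction over the sequence of moves, with the base case supplied by a valid (collision-free, and connected by Assumption~A\ref{a:connected}) initial configuration.

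For the \emph{collision} condition, note first that two stationary agents cannot come into contact, so a collision could only be caused by the unique mover $\mathcal{R}$ running into another agent. Because every agent that $\mathcal{R}$ could reach within one step lies within its sensing range (constraint~C\ref{c:rmax}) and is not moving, $\mathcal{R}$ can locally predict the relative configuration it would be in after any candidate action — including the region its trajectory sweeps — and discard every action whose target or path is occupied. Hence there is a local rule available to $\mathcal{R}$ that guarantees no collision during its move, and since the mover is always the only agent whose position changes, no collision ever occurs.

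For the \emph{disconnection} condition, the crucial observation is that the sensing topology (the graph joining mutually-sensing agents) can change only in the edges incident to the mover $\mathcal{R}$: no other agent moves, so no other edge is created or destroyed. It therefore suffices for $\mathcal{R}$ to ensure that its move does not sever connectivity ``through itself.'' A sufficient, locally decidable rule is for $\mathcal{R}$ to execute only those actions after which every agent it currently senses is still sensed — equivalently, every current neighbor remains within range and field of view at the new position; homogeneity (constraint~C\ref{c:homogeneous}) lets $\mathcal{R}$ reason about this since it knows the common sensor layout. Under such a rule no edge is ever deleted from the sensing graph, only possibly added, so connectedness propagates from the initial graph (Assumption~A\ref{a:connected}) to every subsequent configuration and the swarm never disconnects.

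Combining the two, a connected swarm in which at most one agent moves at a time \emph{can} avoid both collisions and disconnection, which is the claim. The step I expect to be the main obstacle is the connectivity argument: an agent has no global view, so one cannot simply ask it to certify that the whole graph stays connected. The resolution is the remark that only $\mathcal{R}$'s incident edges are mutable, which localizes the property; the price is that the rule above is conservative (it forbids breaking any link). Relaxing it — letting an agent drop a link it can certify as redundant — is possible but needs a separate structural analysis of which links are safe to break, and that refinement is not required here, since Proposition~\ref{proposition:oneatatime} only asserts that safety \emph{can} be maintained, not that the swarm can still rearrange. (Assumption~A\ref{a:canimove}, which lets an agent tell whether its neighbors are moving, is what will later make the one-at-a-time precondition itself enforceable, but it is not needed for the present statement.)
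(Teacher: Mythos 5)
Your proposal is correct and follows essentially the same reasoning as the paper: serializing the motion means any unsafe event is attributable to the unique mover, so it suffices that the mover restricts itself to locally verifiable safe actions. The additional detail you supply — the concrete local rules for avoiding collisions (only move where you can sense) and preserving connectivity (keep all current neighbors in range) — is exactly the content the paper defers to its subsequent Propositions~\ref{proposition:motion} and \ref{proposition:separation}, so you have effectively proved those inline as well.
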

    \begin{proof}
      Consider a connected swarm organized into an arbitrary pattern $P$.
      At a given time $t=t_1$, agent $i$ decides to take an action based on action space $\mathcal{A}$.
      This action should last until $t=t_{2}$.
      However, at time $t_1<t<t_2$, an unsafe event takes place.
	  It follows that the event must have been the fault of agent $i$, because it was the only agent that moved.
	  Therefore, if agent $i$ could select only from safe actions, this would be sufficient to guarantee that the swarm is safe at time $t=t_2$.
    \end{proof}

    Given the constraints of our robots, Proposition \ref{proposition:oneatatime} can be used in a formal analysis of the system, but it cannot be used in a real system.
    This explains the importance for Assumption A\ref{a:canimove} in \secref{sec:problemdefinition}:
    an agent must know whether its neighbors are executing an action in order to avoid causing conflicts.
    If the agents do not move whenever any of their neighbors are moving (according to a first come first served basis), then the swarm approaches the formal requirement of Proposition \ref{proposition:oneatatime}, and the swarm can remain safe provided that the moving agent executes safe actions.
    \footnote{This will be explored in \secref{sec:swarmulatorsimulations}, where the system is tested with fully asynchronous agents in a continuous time and continuous space setting.}
    To define which actions are safe, we bring forward Propositions \ref{proposition:motion} and \ref{proposition:separation}.

    \begin{proposition}
      \label{proposition:motion}
      If an agent is the only agent moving in the entire swarm, and this agent only selects actions in directions that can be sensed by its on-board sensors, then it can be guaranteed that collisions will not occur.
    \end{proposition}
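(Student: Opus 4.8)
The plan is to argue directly from the single‑mover setting of Proposition~\ref{proposition:oneatatime}, reducing the claim to a statement about which grid cell the moving agent's discretized action vacates and which it fills. Fix the configuration immediately before the action: by hypothesis the swarm is organized into a valid — hence collision‑free in the sense of Definition~\ref{def:safe} — pattern $P$, and by Proposition~\ref{proposition:oneatatime} only agent~$i$ is in motion during the interval $[t_1,t_2]$ of its action. Consequently every potential collision in this interval must involve agent~$i$ together with some stationary agent $j\neq i$: two stationary agents that were not colliding at $t_1$ cannot begin to collide while none of them moves. Thus it suffices to show that agent~$i$'s action never brings it onto a cell occupied by another agent, and collision‑freeness then propagates as an invariant from one action to the next.

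Next I would exploit the compatibility between the action space $\mathcal{A}$ and the sensor layout $L$ of Definition~\ref{definition:sensorpattern}. The hypothesis that ``the agent only selects actions in directions that can be sensed'' means precisely that the target cell $c$ reached by agent~$i$'s action is one of the link positions $l_k$ covered by $L$; therefore the entry $l_k$ of agent~$i$'s current local state $s$ already records whether $c$ is occupied. A \emph{safe} action eligible for selection is then one with $l_k=0$, i.e.\ with $c$ empty. Under such an action the occupancy of the grid changes only at two cells — agent~$i$'s former cell becomes empty and $c$ becomes occupied by agent~$i$ alone — and since $c$ was empty and no other agent moves, no cell is ever held by two agents, so the configuration at $t_2$ is again collision‑free. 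Conversely, had agent~$i$ moved in a direction outside $L$, it could have stepped blindly onto a cell occupied by an unseen neighbor, which is exactly the case the hypothesis excludes; this is where the restriction to sensed directions does the real work.

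The step I expect to be the most delicate is the passage from this clean discrete picture to the continuous‑space motion the agents actually execute, since an action physically transports agent~$i$ along a path between grid points rather than teleporting it. Here I would invoke the discretization convention underlying the framework — grid points spaced at the inter‑robot equilibrium distance, each agent represented by its nearest grid point (cf.\ \secref{sec:localstatespace}) — to argue that a one‑step move toward an adjacent, sensed, empty cell keeps agent~$i$ clear of every stationary neighbor throughout the transit and not merely at the endpoint. Making this rigorous requires the interplay between sensor range, grid spacing, and the attraction/repulsion equilibrium distances to be spelled out, and it is the only part of the argument that is not pure combinatorics; in the idealized discrete world it collapses to the endpoint argument above, and the continuous case is what the footnote defers to the tests of \secref{sec:swarmulatorsimulations}.
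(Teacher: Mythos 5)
Your argument is correct and follows essentially the same route as the paper's proof: since only agent $i$ moves, any collision must involve it, and because every admissible action targets a position covered by the sensor layout, the agent already knows the occupancy of the destination and can simply refuse occupied targets. Your closing remarks on the continuous-space transit go beyond what the paper attempts here (the paper likewise confines this proposition to the formal discrete setting and defers the continuous case to the simulations), so they are a reasonable caveat rather than a gap.
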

    \begin{proof}
      Consider an agent $i$ in a swarm.
    	Following Proposition \ref{proposition:oneatatime}, we know that the agent will be the only agent to move.
      The agent moves in the environment according to the action space $\mathcal{A}$.
      If all actions in $\mathcal{A}$ lead to a location that is already sensed, then agent $i$ can establish whether the action will cause a collision, and it can choose against performing these actions.
    \end{proof}

    \begin{proposition}
      \label{proposition:separation}
      If an agent is the only agent moving in the entire swarm, and the agent only takes actions such that, at its new location, all its prior neighbors and itself remain connected, then the swarm will remain connected.
    \end{proposition}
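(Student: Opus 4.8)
The plan is to reason entirely at the level of the sensing graph. Let $G=(V,E)$ be the sensing topology of the swarm immediately before agent $i$ moves, and $G'=(V,E')$ the topology immediately after; at the moment of the move the swarm is connected (this is the invariant we are trying to preserve, guaranteed initially by Assumption~\ref{a:connected}). The first step is to note that, since $i$ is the only moving agent (Proposition~\ref{proposition:oneatatime} makes this the relevant case), every pair of agents $j,k\neq i$ retains its relative position, so $(j,k)\in E'$ exactly when $(j,k)\in E$. Consequently the only edges that can be created or destroyed by the move are those incident to $i$, and in particular $G-i$ and $G'-i$ are the same graph on $V\setminus\{i\}$.

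Next I would decompose $G-i$ into connected components $C_1,\dots,C_r$. Since $G$ is connected and $i\notin C_\ell$, each $C_\ell$ must contain at least one neighbor of $i$ in $G$; otherwise $C_\ell$ would be a union of components of $G$, contradicting connectivity. For each $\ell$ fix a vertex $x_\ell\in C_\ell\cap N_i$, where $N_i$ denotes the set of prior neighbors of $i$. Because the agents in $C_\ell$ do not move, $C_\ell$ is still a connected subgraph in $G'$, so $x_\ell$ is connected in $G'$ to every vertex of $C_\ell$.

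Now I would invoke the hypothesis of the proposition: at its new location, agent $i$ together with all of $N_i$ forms a connected subgraph of $G'$. Hence in $G'$ there is a path from $i$ to each $x_\ell$ that stays inside $\{i\}\cup N_i$, and from $x_\ell$ a path to every vertex of $C_\ell$. Since $V=\{i\}\cup C_1\cup\dots\cup C_r$, it follows that $i$ reaches every vertex of $V$ in $G'$, so $G'$ is connected and the swarm does not disconnect.

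The delicate point, which I would pin down carefully at the outset, is the precise meaning of ``all its prior neighbors and itself remain connected'': I read it as the induced subgraph on $\{i\}\cup N_i$ being connected after the move, not merely the weaker claim that $i$ keeps a direct edge to each former neighbor (the latter would make the result almost immediate, since then $E'\supseteq E$). Under the stronger, more permissive reading the component argument above is exactly what is needed; everything else is the elementary fact that re-attaching a vertex to a representative of every component of $G-i$ restores connectivity.
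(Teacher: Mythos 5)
Your proof is correct and follows essentially the same route as the paper's: since only agent $i$ moves, only edges incident to $i$ can change, and keeping $\{i\}\cup N_i$ connected after the move suffices to preserve global connectivity. The paper asserts that sufficiency in a single informal line, whereas you actually justify it via the component decomposition of $G-i$ (each component must contain a prior neighbor of $i$, so re-attaching $i$ to a connected $\{i\}\cup N_i$ reconnects everything); your explicit disambiguation of what ``remain connected'' means is a genuine clarification, but it does not change the underlying argument.
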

    \begin{proof}
      Consider a connected swarm of $N$ agents.
      The graph of the swarm is connected if any node (agent) $i$ features a path to any other node (agent) $j$.
      Consider the case where agent $i$ takes an action.
      If, following the action, agent $i$ is still connected to all its original neighbors, then the connectivity of the graph was not affected.
      If agent $i$ only selects actions where, at its final position, this principle is respected, then it will be able to move while guaranteeing that the swarm remains connected.
    \end{proof}

    Using Propositions \ref{proposition:motion} and \ref{proposition:separation} we can extract a state-action space where safe actions are guaranteed.
    Let $s$ be the state of an agent, and $\mathcal{S}$ be the set of all local states that an agent can be in.
    Agents with state $s\in\mathcal{S}_{des}$ do not move, as they wish to remain in their current state.
    Ideally, all other agents would move until they all achieve a state $s\in\mathcal{S}_{des}$, at which point the desired pattern is formed.
    The full state-action map is thus given by: 
    $\mathcal{Q}=\mathcal{S}_{\lnot des} \times \mathcal{A}$, where $\mathcal{S}_{\lnot des}= \mathcal{S} \setminus \mathcal{S}_{des}$.
    We then scan through $\mathcal{Q}$ to identify all state-action pairs that:
    \begin{enumerate}[a)]
      \item \textbf{are in the direction of a neighbor.}\\
        These state-action pairs will lead to collisions.
        They form the set $\mathcal{Q}_{collision}$.

      \item \textbf{feature an action in a direction that is not sensed.}\\
        Following Proposition \ref{proposition:motion}, these potentially lead to collisions.
        They form the set $\mathcal{Q}_{blind}$.

      \item \textbf{may cause the graph to disconnect.}\\
        Following Proposition \ref{proposition:separation}, these actions will break the local connectivity, with potentially a global impact.
        They form the set $\mathcal{Q}_{separation}$.
    \end{enumerate}

    We then define:
    \begin{equation}
      \mathcal{Q}_{safe} = \mathcal{Q} \setminus (\mathcal{Q}_{collision} \cup \mathcal{Q}_{blind} \cup \mathcal{Q}_{separation})
      \label{eq:q_reduced}
    \end{equation}
    $\mathcal{Q}_{safe}$ is a state-action mapping that only includes safe state-action relations.
    It may be that not all states from $\mathcal{S}_{\lnot des}$ are present in $\mathcal{Q}_{safe}$.
    An agent in such a state would be unable to select any safe action \textemdash{ }it would be \emph{blocked}.
    All states in which an agent would be blocked are grouped under $\mathcal{S}_{blocked}$.
    Functionally speaking, the states in $\mathcal{S}_{blocked}$ and $\mathcal{S}_{des}$ are equivalent.
    In either case, the agent will not move.
    Based on this, we create a new set $\mathcal{S}_{static}$ and its complement $\mathcal{S}_{active}$.
    \begin{align} 
      \mathcal{S}_{static} &= \mathcal{S}_{des} \cup \mathcal{S}_{blocked} \\
      \mathcal{S}_{active} &= \mathcal{S} \setminus \mathcal{S}_{static}
    \end{align}
    It is thus $\mathcal{S}_{static}$, and not only $\mathcal{S}_{des}$, that  should be checked in order to assess whether the desired pattern is a unique solution.

  \subsection{Agent Behavior and Local Proof of Convergence}
  \label{sec:pursuitofhappiness}
    The behavior of an agent is presented as a \gls{FSM} in \figref{fig:statemachine}.
    An agent can be in one of two macro states:
    \renewcommand{\labelenumi}{}
    \begin{enumerate*}
      \item \textbf{Static:} The agent is in state 
      $s\in\mathcal{S}_{static}$ and is unable to move.
      \item \textbf{Active:} The agent is in state 
      $s\in\mathcal{S}_{active}$ 
      and can take an action based on $\mathcal{Q}_{safe}$
    \end{enumerate*}
    \renewcommand{\labelenumi}{\theenumi.}
    \begin{figure}[ht!]
      \centering
      \includegraphics[width=0.4\textwidth]{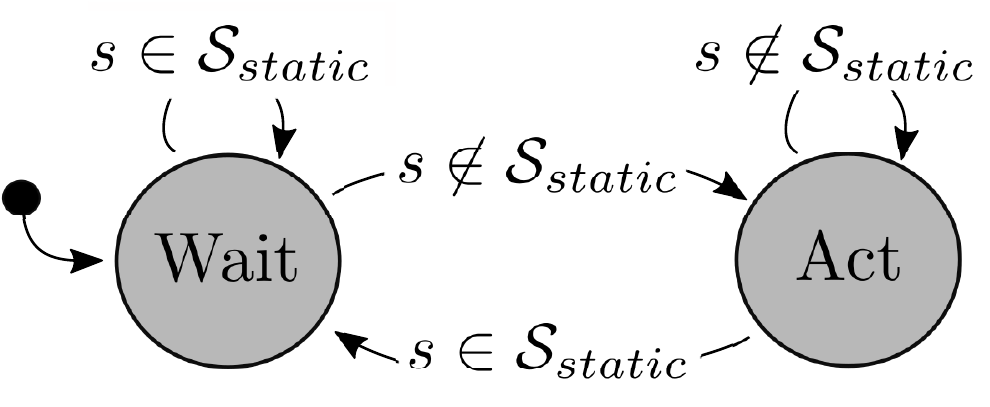}
      \caption{FSM of agent behavior}
      \label{fig:statemachine}
    \end{figure}

    We now provide the local proof, and the necessary conditions, for the desired pattern to be achieved from any initial configuration of the swarm, provided that the swarm initiates in a connected state (Assumption A\ref{a:connected}).
    In the following analysis, let $P_0$ be an arbitrary initial pattern formed by a swarm, and let $P_{des}$ be the desired final pattern.
    $N_{des}$ is the number of agents that are needed in order to compose $P_{des}$.
    Here, we assume that the swarm is always composed of $N_{des}$ agents.
    The only formation in which all agents are static is the desired formation $P_{des}$.
    \footnote{This property, assumed in this section, needs to be checked independently. Our implementation is detailed in \secref{sec:implementation}.}
    To reflect Proposition \ref{proposition:oneatatime}, we formally analyze how the swarm evolves by modeling actions in discrete time steps.
    At each time step $k$, an arbitrary agent with state $s\in\mathcal{S}_{active}$ executes an action based on $\mathcal{Q}_{safe}$.\\

    We begin by establishing that, for any pattern $P\neq P_{des}$, there is always one active agent, as per Lemma \ref{lemma:active_states_present}.
    \begin{lemma}
      For a swarm of $N_{des}$ agents, if $\mathcal{S}_{static}$ is such that the desired pattern $P_{des}$ is possible and unique, any arbitrary pattern $P\neq P_{des}$ will feature at least $1$ agent with a state $s\in\mathcal{S}_{active}$.
      \label{lemma:active_states_present}
    \end{lemma}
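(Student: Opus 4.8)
The plan is to prove the lemma by contradiction, leveraging directly the uniqueness hypothesis on $\mathcal{S}_{static}$. First I would recall what it means for a pattern $P$ to be realized by the swarm: it is an assignment of a local state to each of the $N_{des}$ agents such that these local states mutually \emph{match} in the sense of Definition \ref{def:match} and describe a geometrically consistent arrangement. Since every agent is always in \emph{some} local state, any realized pattern $P$ partitions the swarm into the agents whose state lies in $\mathcal{S}_{static} = \mathcal{S}_{des}\cup\mathcal{S}_{blocked}$ and those whose state lies in $\mathcal{S}_{active} = \mathcal{S}\setminus\mathcal{S}_{static}$.

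Next I would set up the contradiction: suppose $P \neq P_{des}$ but $P$ contains no agent with a state in $\mathcal{S}_{active}$. Then every one of the $N_{des}$ agents forming $P$ has a local state in $\mathcal{S}_{static}$, so $P$ is a valid matching of $N_{des}$ states drawn (with repetition) from $\mathcal{S}_{static}$. But by hypothesis $\mathcal{S}_{static}$ is such that the desired pattern is \emph{possible and unique} — that is, as noted at the end of \secref{sec:stateaction_space}, the only configuration in which every agent is static is $P_{des}$ itself. Hence $P = P_{des}$, contradicting $P \neq P_{des}$. Therefore any pattern $P \neq P_{des}$ must contain at least one agent whose state lies in $\mathcal{S}_{active}$, which is exactly the claim.

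The argument is essentially a restatement of the uniqueness hypothesis, so I do not expect a genuine obstacle; the one point requiring care is bookkeeping about which set the uniqueness is asserted over. In \secref{sec:desiredstates} the four outcomes (and the notion of the desired pattern being "possible and unique") are phrased in terms of $\mathcal{S}_{des}$, whereas this lemma — and the convergence argument that builds on it — needs the property to hold for the larger set $\mathcal{S}_{static} = \mathcal{S}_{des}\cup\mathcal{S}_{blocked}$, since blocked agents are also immobile. I would make this explicit in the proof, citing the remark at the end of \secref{sec:stateaction_space}, so that "all agents static" is correctly identified with "all agents in $\mathcal{S}_{static}$" rather than "all agents in $\mathcal{S}_{des}$"; this also makes clear why the hypothesis must be verified for $\mathcal{S}_{static}$ by the checker described in \secref{sec:implementation}.
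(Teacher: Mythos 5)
Your proof is correct and is essentially the paper's own argument: the paper states the contrapositive directly ($N_{des}$ instances of states in $\mathcal{S}_{static}$ can only coexist as $P_{des}$, so any $P\neq P_{des}$ must contain an active agent), while you phrase it as a contradiction, which is logically identical. Your added remark that the uniqueness hypothesis must be verified over $\mathcal{S}_{static}$ rather than $\mathcal{S}_{des}$ matches the paper's own caveat at the end of \secref{sec:stateaction_space} and is already built into the lemma's hypothesis.
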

    \begin{proof}
      By definition:
      $\mathcal{S}_{static}\cap \mathcal{S}_{active}=\emptyset$ and 
      $\mathcal{S}_{static}\cup \mathcal{S}_{active}=\mathcal{S}$.
      For a swarm of $N_{des}$ agents that can be in states $s\in\mathcal{S}$, $N_{des}$ instances of states $s\in\mathcal{S}_{static}$ can only coexist into $P_{des}$, which is known to be the unique outcome.
      Therefore, it follows that any other pattern must feature at least one agent that is in a state $s\not\in\mathcal{S}_{static}$, meaning that it is in a state $s\in\mathcal{S}_{active}$.
    \end{proof}

    Following Lemma \ref{lemma:active_states_present}, we must determine whether the actions taken by the agents can lead to forming $P_{des}$ starting from any initial pattern $P_0$.
    Overall, when an agent transitions from state $s$ to a state $s'$, its transition can be of three types:
    \renewcommand{\labelenumi}{\emph{Transition Type} \theenumi:}
    \begin{enumerate*}
      \item By its own action, if $s\in\mathcal{S}_{active}$, via an action from $\mathcal{Q}_{safe}$ (when this happens, some neighbors might leave from view, while new neighbors might come into view).
      \item By the action of a neighbor (when this happens, the neighbor could also move out of view).
      \item By a new agent, previously outside of view, moving into view and becoming a new neighbor.
    \end{enumerate*}
    \renewcommand{\labelenumi}{\theenumi.}
    Let $G_{\mathcal{S}} = (V,E)$ be a directed graph, where $V$ is a set of vertices (or nodes) and $E$ is a set of edges.
    We will let each node of $G_{\mathcal{S}}$ represent each local state in $\mathcal{S}$, such that $V=\mathcal{S}$.
    The edges of the graph $E$ are all local state transitions that an agent can experience as a result of the three transition types above.
    We will define $E_1$ as the edges describing Transition Type 1, 
    $E_2$ as the edges describing Transition Type 2, and
    $E_3$ as the edges describing Transition Type 3.
    Based on this, let $G_{\mathcal{S}}^{x}$ denote the subgraph of $G_{\mathcal{S}}$ that only focuses on Transition Type $x$ (i.e., has edges $E_x$), such that 
    $\bigcup_{x=1}^3 G_{\mathcal{S}}^{x} = G_{\mathcal{S}}$.
    The graphs $G_S^1$, $G_S^2$, $G_S^3$, and $G_S$ are illustrated in \figref{fig:gs}.
    Using this, we present Lemma \ref{l:achievability}, which expresses the conditions needed for a pattern to be \textbf{achievable}, as defined by Definition \ref{definition:achievable}.
    Note that the condition in this Lemma does not imply that the pattern will be achieved from any initial configuration of the swarm (this comes later in the section), but it only establishes whether it is within the capabilities of the agents to achieve the local states required to make the pattern.

	  \begin{figure}[ht!]
	    \centering
	    \includegraphics[width=\textwidth]{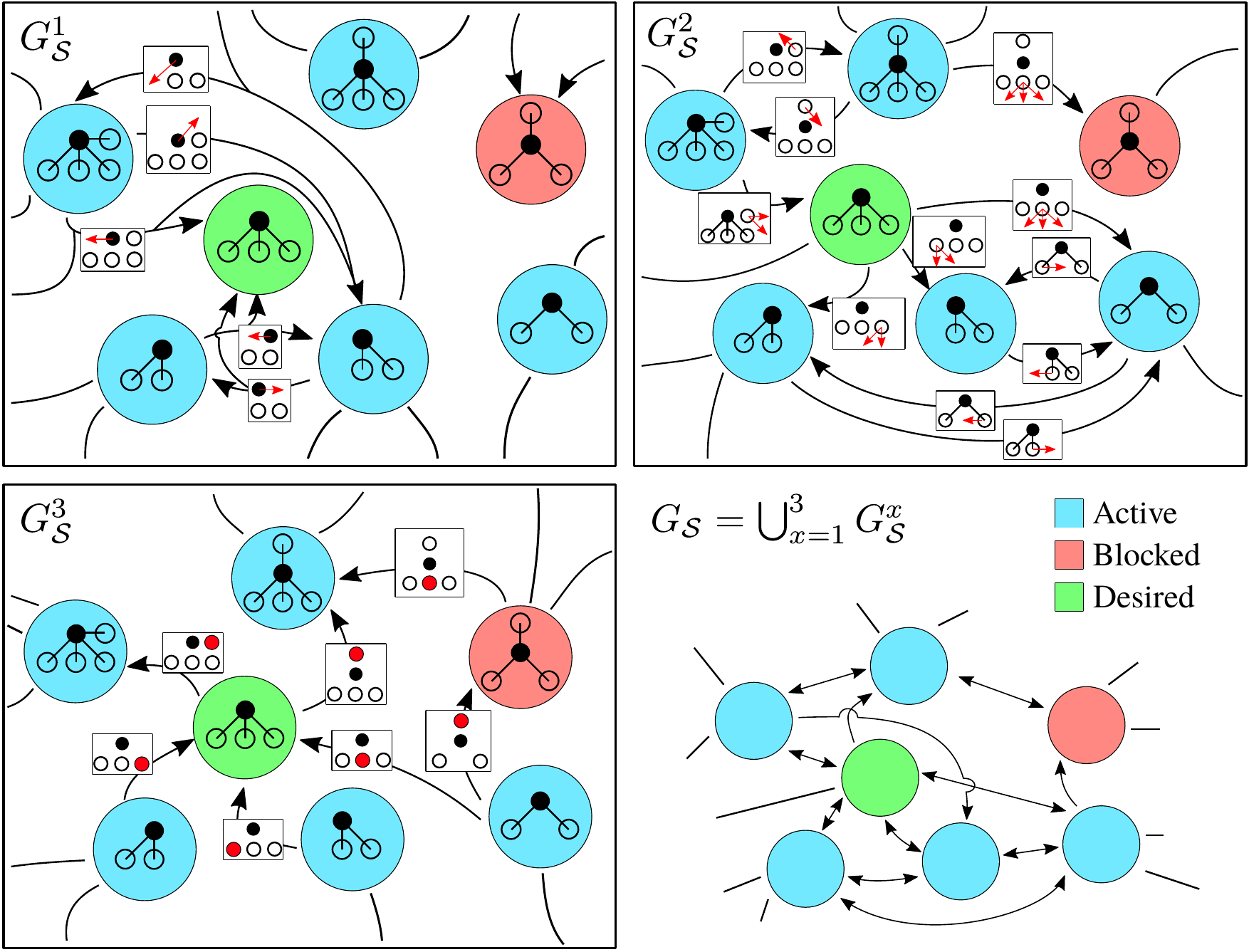}
	    \caption{
	    Depiction of how the local states of an agent can change as a result of movements in the swarm.
      Specifically, the figure shows a portion of graph $G_S$ and its subgraphs $G_S^1$, $G_S^2$, $G_S^3$.
  	  Graph $G_S^1$ corresponds to Transition Types 1 (the edges depict the agent taking an action, each action can lead to different outcomes depending on whether new agents comes into view as a result of the action).
  	  Graph $G_S^2$ corresponds to Transition Types 2 (the edges depict the neighbors of the agent taking an action).
  	  Graph $G_S^3$ corresponds to Transition Types 3 (the edges depict a new agent coming into view).
      Green nodes indicate a desired state, blue nodes indicate an active state, and red nodes indicate a blocked state.
      The agents have an omni-directional sensor layout as in 
      Case 2 from \figref{fig:state_space_design_options} 
      and omni-directional motion 
      as seen in \figref{fig:action_space_design_options}.
	    }
	    \label{fig:gs}
	  \end{figure}

    \begin{definition}
      \label{definition:achievable}
      A pattern $P_{des}$ is \textbf{achievable} if each of its local constituent states in 
      $\mathcal{S}_{des}$ can be reached starting from any local state in $\mathcal{S}$.
    \end{definition}

    \begin{lemma}
      \label{l:achievability}
      If the digraph
      $G_\mathcal{S}^1 \cup G_\mathcal{S}^2$ shows that each state in 
      $\mathcal{S}$ features a path to each state in $\mathcal{S}_{des}$, 
      then $P_{des}$ is achievable independently of the local states that compose $P_0$.
    \end{lemma}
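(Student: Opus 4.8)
The plan is to reduce the statement almost directly to Definition~\ref{definition:achievable}, which only asks that each constituent state $s_{des}\in\mathcal{S}_{des}$ be \emph{reachable} by an agent starting from an arbitrary local state $s\in\mathcal{S}$ (it does not ask for simultaneity of the whole pattern, as the remark preceding the lemma already flags). So I would fix an arbitrary agent $i$, an arbitrary current state $s\in\mathcal{S}$ of $i$, and an arbitrary target $s_{des}\in\mathcal{S}_{des}$. By hypothesis there is a finite directed path $s=s^{(0)}\to s^{(1)}\to\dots\to s^{(m)}=s_{des}$ in $G_\mathcal{S}^1\cup G_\mathcal{S}^2$, and each of its edges lies in $E_1$ or $E_2$, i.e., is by construction a local-state transition that agent $i$ can undergo either through one of its own safe actions (Transition Type~1, available whenever $s^{(j)}\in\mathcal{S}_{active}$) or through a motion of one of the neighbors recorded in $s^{(j)}$ (Transition Type~2).

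The core of the argument is then to realize this path one edge at a time. Invoking Proposition~\ref{proposition:oneatatime}, at time step $j$ I let exactly one agent move: if the $j$-th edge is in $E_1$, agent $i$ executes the corresponding action from $\mathcal{Q}_{safe}$; if it is in $E_2$, the neighbor whose motion induces that transition executes its action, with every other agent kept static. In either case agent $i$'s local state changes from $s^{(j)}$ to $s^{(j+1)}$ exactly as the edge prescribes — any unrelated changes elsewhere in the swarm are irrelevant. Iterating over $j=0,\dots,m-1$ carries agent $i$ into $s_{des}$ in $m$ steps. Since $i$, $s$ and $s_{des}$ were arbitrary, every state in $\mathcal{S}_{des}$ is reachable from every state in $\mathcal{S}$, and by Definition~\ref{definition:achievable} the pattern $P_{des}$ is achievable regardless of $P_0$. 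It is also worth noting explicitly that $G_\mathcal{S}^3$ is not needed: Transition Type~3 edges could only enlarge the reachable set, so connectivity within $G_\mathcal{S}^1\cup G_\mathcal{S}^2$ alone already suffices.

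The step I expect to be the main obstacle is justifying that a Type~2 edge can genuinely be \emph{triggered} from agent $i$'s present configuration. One must argue that if $i$ is in state $s^{(j)}$ then the neighbor required to realize that edge is in fact present (it is one of the neighbors encoded in $s^{(j)}$) and is itself free to perform the needed motion; this rests on the definition of $E_2$ as the set of transitions an agent \emph{can} experience as a result of a neighbor's action, together with the one-at-a-time scheduling of Proposition~\ref{proposition:oneatatime} that keeps every other agent out of the way. A secondary point I would make explicit is that intermediate states along the path may be blocked states in $\mathcal{S}_{static}$, through which $i$ can still be carried by neighbor actions via Type~2 even though $i$ cannot move on its own — so the path need not remain inside $\mathcal{S}_{active}$ until its final vertex.
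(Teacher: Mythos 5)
Your proposal is correct and follows essentially the same route as the paper: both reduce the claim to Definition~\ref{definition:achievable} and use the hypothesized path in $G_\mathcal{S}^1 \cup G_\mathcal{S}^2$ to argue that no local state is incapable of transitioning into a desired state, with $G_\mathcal{S}^3$ deliberately set aside as an unneeded (restriction-tightening) source of transitions. Your edge-by-edge realization under one-at-a-time scheduling, and your explicit flagging of the question of whether a Type-2 edge can actually be \emph{triggered} from the current configuration, are in fact more careful than the paper's own argument, which asserts the same conclusion at a higher level without addressing that subtlety.
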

    \begin{proof}
      $P_{des}$ is formed if and only if all agents have a state
      $s\in\mathcal{S}_{des}$, where
      $\mathcal{S}_{des}\subseteq\mathcal{S}$.
      Consider an arbitrary initial pattern $P_0$ for which the local states of the agents form an arbitrary set $\mathcal{S}_0$.
      Via Lemma \ref{lemma:active_states_present} we know that there is at least one agent in the swarm that is active for any pattern 
      $P_0\neq P_{des}$, 
      and in turn any set of states $S_0\neq S_{des}$.
      As the active agents move, they will experience transitions described by 
      $G_\mathcal{S}^1$, and their neighbors will experience transitions described by $G_\mathcal{S}^2$.
      By the unified graph $G_\mathcal{S}^1 \cup G_\mathcal{S}^2$ we describe the local transitions that take place to an agent as it moves and as its neighbors move.
      Consider a state $s\in\mathcal{S}_0$ that is incapable (either by its own actions or by the actions of its potential neighbors) to transition to a state in $\mathcal{S}_{des}$.
      It follows that having this state in $S_0$ may mean that a state in $S_{des}$ cannot be achieved, and in turn that $P_{des}$ cannot be realized.
      However, if it is possible for any state in $\mathcal{S}$ to experience local transitions such that it may reach any state $\mathcal{S}_{des}$, it follows that $P_{des}$ is achievable independently of the local states that compose $P_0$ (i.e, the set $\mathcal{S}_0$), because there is no state $s\in\mathcal{S}_0$ that is incapable of executing the necessary transitions that would lead it to be in a state $\mathcal{S}_{des}$.
      By ignoring the role of $G_\mathcal{S}^3$, we restrict the system such that:
        \begin{enumerate*}
          \item Any state $s$ that has too few links for a desired state will have to be active and move to a position where it is surrounded by enough agents.
          \item Any state $s\in\mathcal{S}_{blocked}$ can become active by the actions of a neighbor.
          \item The transitions that occur must occur because of changes in the local neighborhood.
        \end{enumerate*}
      This additional restriction ensures that the system can rely on the actions of an agent and/or its neighbors.
    \end{proof}

    Lemma \ref{l:achievability} decides to ignore the possible role of $G_\mathcal{S}^3$ in order to be more restrictive.
    This ensures that the transitions do not rely on agents coming into view.
    The conditions of Lemma \ref{l:achievability} guarantee that any initial state could potentially turn into a desired state, such that there are no restrictions on the local states that could compose $P_0$.
    However, this is not equivalent to saying that $P_{des}$ will always eventually be formed from any arbitrary $P_0$, which is the property that we wish to achieve. 
    To extract conditions that guarantee that $P_{des}$ can be achieved from any $P_{0}$, we look into the properties of a global graph $G_P$.
    The nodes of $G_P$ are all possible patterns that the swarm can generate (including $P_{des}$), and the edges are all possible transitions between patterns that can exist as a result of an action taken by an active agent.
    Therefore, if the properties of $G_P$ are such that there is a path from any node to the node for $P_{des}$, and that this path is free of \textbf{deadlocks} 
    (see Definition \ref{definition:deadlock})
    then we know that $P_{des}$ can be reached from any initial pattern $P_0$.

    \begin{definition}
      \label{definition:deadlock}
      A \textbf{deadlock} is a situation in which the swarm continuously transitions into the same sequence of patterns, e.g. $P_0\rightarrow P_1\rightarrow P_2\rightarrow P_0\rightarrow P_1\rightarrow P_2\rightarrow P_0...$, and cannot transition to any other pattern.
    \end{definition}

    At this point, a solution to determine that $P_{des}$ can be reached from any initial pattern $P_0$, without deadlocks, would be to compute $G_P$ and directly inspect whether the property is fulfilled, but this would come at the cost of a large state explosion \citep{dixon2012towards}.
    Therefore, we instead continue with a local proof and impose local conditions that guarantee that $G_P$ will have the desired properties.
    Inherently, as for Lemma \ref{l:achievability}
    (which ignored the possible role of $G_S^3$),
    this proof comes at the cost of some additional local level restrictions which may not be necessary for all global patterns.
    However, a local approach enables us to determine sufficient conditions while only using the local state information, and is thus independent of the size of the swarm.
    The approach focuses on the role of \textbf{simplicial} states (Definition \ref{definition:simplicial}) and their \textbf{cliques}  (Definition \ref{definition:clique}).
    \footnote{
    	These definitions are borrowed from, but not equivalent to, the typical definitions of \emph{simplicial node} and \emph{clique}.
	    In standard graph theory, a simplicial node is a node whose neighboring nodes are \emph{fully} connected, not just connected, and thus form only one clique, and not several \citep{vansteen2010graph}.
	  }

    \begin{definition}
      \label{definition:simplicial}
      A \textbf{simplicial} state is a state $s\in\mathcal{S}$ 
      for which, if that agent were to move away completely out of the neighborhood, its original neighbors would remain connected.
      Note that if all neighboring positions are occupied, then the agent cannot move away, so an agent in this state is not simplicial.
      The set of simplicial states is denoted $\mathcal{S}_{simplicial}$, where $\mathcal{S}_{blocked}\cap \mathcal{S}_{simplicial} = \emptyset$.
      The set of states that are not simplicial is denoted $\mathcal{S}_{\lnot simplicial}$.
    \end{definition}

    \begin{definition}
      \label{definition:clique}
	    A \textbf{clique} is a connected set of neighbors of an agent.
	    If the agent were removed, the members of each clique would remain connected amongst each other, but there would not be a connection between the different cliques.
	    It follows that simplicial agents only have one clique, while blocked agents have two or more cliques.
    \end{definition}

    Agents in a state $s\in\mathcal{S}_{simplicial}\cap\mathcal{S}_{active}$ can move away from their neighborhood \textemdash{ }this is an important property.
    Intuitively, a simplicial active agent can cause the swarm to reshuffle into different achievable patterns without deadlocks.
    Alternatively, when simplicial agents are not present, this may not be possible.
    As exemplified in \figref{fig:gp_b}: when a pattern with no simplicial agents is reached, the active (non-simplicial) agents cannot remove themselves from the neighborhood and solve the deadlock situation.
    However, because the pattern in \figref{fig:gp_a} has simplicial agents that can travel around all others, we can always begin from any initial condition and transition to $P_{des}$.
    The proof we present stems from this intuition.
    We first present a local condition to ensure that any node (pattern) in $G_P$ always eventually transitions to a node with at least
    one active simplicial agent (unless $P_{des}$ is reached).
    Then, we will prove that this property enables for a graph $G_P$ that always features a path to $P_{des}$ from any initial condition.\\
 
    \begin{figure}[ht]
	    \centering
      \begin{subfigure}[t]{0.34\textwidth}
        \centering
        \includegraphics[width=\textwidth]{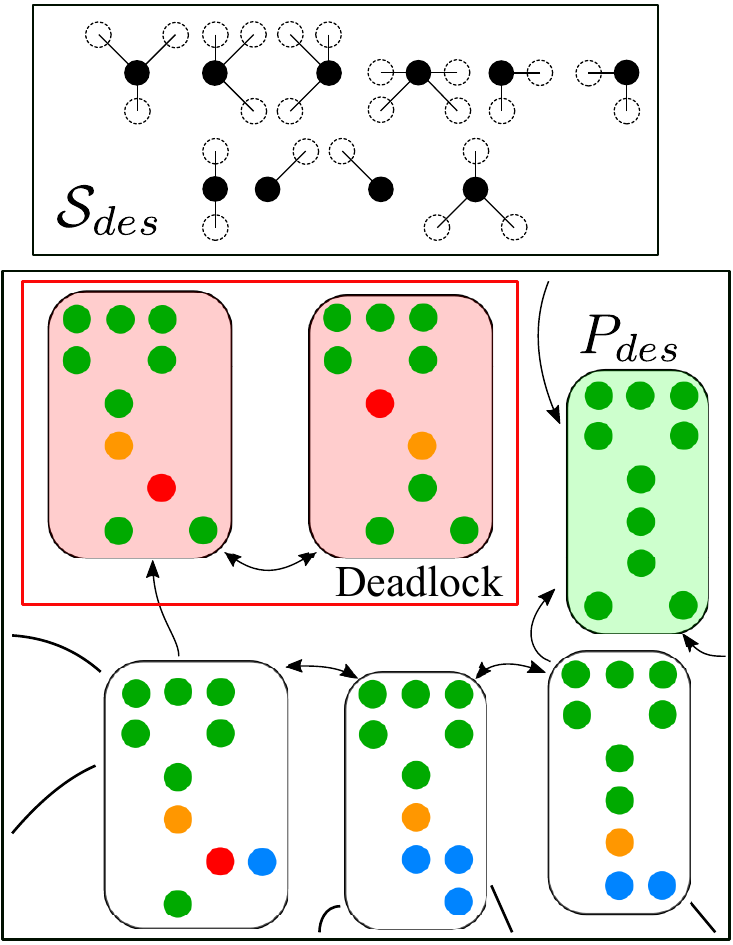}
        \caption{Example of a deadlock}
        \label{fig:gp_b}
      \end{subfigure}
      \hspace{2mm}
      \begin{subfigure}[t]{0.63\textwidth}
        \centering
        \includegraphics[width=\textwidth]{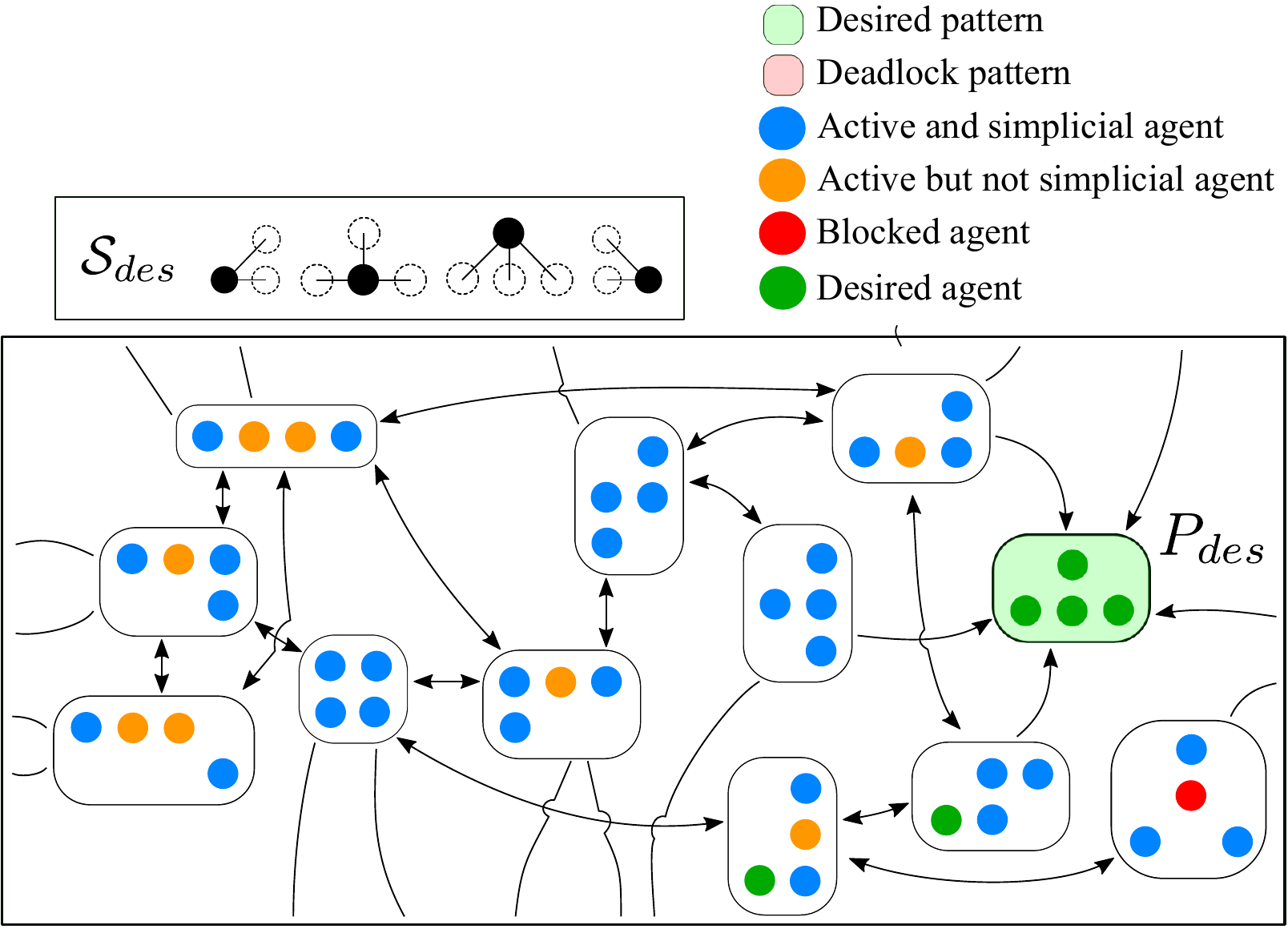}
        \caption{Pattern where a deadlock is not possible}
        \label{fig:gp_a}
      \end{subfigure}
	    \caption{
      Illustrations of how a swarm can transition between different patterns, based on movements of the agents that are in active states. 
	    More specifically, the figure shows a portion of $G_P$ for two possible desired patterns.
      Notice that the deadlock in (a) does not feature any simplicial active agents.
      The agents have an omni-directional sensor layout as in 
      Case 2 from \figref{fig:state_space_design_options} 
      and omni-directional motion 
      as seen in \figref{fig:action_space_design_options}.
	    }
	    \label{fig:gp}
  	\end{figure}

    Consider the graph $G_P^{AS} \subseteq G_P$  (where the superscript ``AS'' stands for ``Active and Simplicial'').
    The nodes of $G_P^{AS}$ are all nodes of $G_P$ which feature one of the following:
    \begin{enumerate*}
    	\item one or more states $s\in\mathcal{S}_{simplicial}\cap\mathcal{S}_{active}$. We group all these patterns in the set $P_{AS}$.
    	\item only states in $\mathcal{S}_{static}$ (this is the pattern $P_{des}$).
    \end{enumerate*}
    Therefore, the nodes of $G_P^{AS}$ are all patterns $P\in P_{AS}\cup P_{des}$.
    The edges of $G_P^{AS}$ are all edges that connect these nodes as in $G_P$.
    Due to the importance of active and simplicial agents in avoiding deadlocks, we wish to ensure that a pattern in $G_P^{AS}$ will be reached from any pattern in $G_P$.
    The conditions for this depend on the set $\mathcal{S}_{active}$.
    If $\mathcal{S}_{active}=\mathcal{S}_{simplicial}$, following Lemma \ref{lemma:active_states_present}, then $G_P^{AS} = G_P$.
		If $\mathcal{S}_{active}\cap\mathcal{S}_{\lnot simplicial}\neq\emptyset$, 
    then we must impose further restrictions, put forward by Lemma \ref{lemma:activeandsimplicialpresent}.
    In this Lemma we also make use of a graph $G_\mathcal{S}^{2r}\subseteq G_\mathcal{S}^{2}$, which only considers the transitions in $G_\mathcal{S}^2$ that do not feature an agent leaving the neighborhood, but only move about the agent.

    \begin{lemma}
      \label{lemma:activeandsimplicialpresent}
      If the following conditions are satisfied:
      \begin{enumerate*}
      	\item for all states $s\in\mathcal{S}_{blocked}\cap\mathcal{S}_{\lnot des}$, none of the cliques of each state can be formed uniquely by agents that are in a state
      	$s\in\mathcal{S}_{des}\cap\mathcal{S}_{simplicial}$,

      	\item $G_\mathcal{S}^{2r}$ shows that all static states with two neighbors can directly transition to an active state,
      \end{enumerate*}
      then the nodes in $G_P^{AS}$ will be reached from any other node in $G_P$.
    \end{lemma}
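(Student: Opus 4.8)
The plan is to argue by contradiction. Suppose some pattern $P$ never reaches a node of $G_P^{AS}$, and let $\mathcal{T}$ be the set of all patterns reachable from $P$ (including $P$ itself). Then $\mathcal{T}$ is nonempty, finite (the swarm has $N_{des}$ agents, so up to translation there are finitely many connected patterns), closed under the transition relation of $G_P$, and disjoint from $P_{AS}\cup\{P_{des}\}$. By Lemma~\ref{lemma:active_states_present}, every pattern in $\mathcal{T}$ contains at least one active agent, and, since no pattern in $\mathcal{T}$ lies in $P_{AS}$, each such agent is in $\mathcal{S}_{active}\cap\mathcal{S}_{\lnot simplicial}$. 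The goal is then to show that, inside $\mathcal{T}$, a sequence of transitions nonetheless produces an active simplicial agent, or else forces a pattern to be $P_{des}$ — either outcome contradicting the defining properties of $\mathcal{T}$.

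Next I would use Condition~2 to set up an ``activation wave''. Pick a pattern in $\mathcal{T}$ and a non-simplicial active agent $a$; since $a$ cannot leave any of its cliques without disconnecting the swarm (Proposition~\ref{proposition:separation}), its safe moves keep it tied to every clique, and as it repositions itself about its neighbourhood each neighbour undergoes a transition belonging to $G_\mathcal{S}^{2r}$. By Condition~2, any neighbour of $a$ that is static with exactly two neighbours can be driven to an active state this way; since $\mathcal{T}$ is transition-closed, the resulting pattern is again in $\mathcal{T}$. If such a freshly activated agent had its two neighbours connected to one another (one clique), it would be simplicial and the pattern would lie in $P_{AS}\subseteq G_P^{AS}$ — a contradiction. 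So, within $\mathcal{T}$, the only two-neighbour static agents ever adjacent to an active agent are blocked agents with two singleton cliques, and the wave turns them into (non-simplicial) active agents, enlarging the active set.

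Then I would bring in Condition~1 to keep the wave propagating through regions of blocked agents. Condition~1 forbids any clique of a blocked, non-desired state from consisting solely of agents that are at once desired and simplicial; hence, starting at any blocked non-desired agent and repeatedly stepping, inside a clique, to an agent that is not desired-and-simplicial, one eventually reaches an active agent (giving a contradiction as above) — unless, for every blocked non-desired agent and every active agent, such walks stall at static agents, meaning activation never spreads. In that degenerate case every pattern of $\mathcal{T}$ has all its agents static; but the standing assumption of this section is that the only all-static formation is $P_{des}$, so $\mathcal{T}=\{P_{des}\}$, contradicting $\mathcal{T}\cap\{P_{des}\}=\emptyset$. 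As every branch ends in a contradiction, $G_P^{AS}$ is reachable from every node of $G_P$.

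The hard part will be turning the activation wave into a rigorous induction over the structure of the sensing graph, and — the genuinely delicate point — showing that when activation reaches a ``minimal'' agent at the tip of a chain of ever-smaller cliques, that agent really is simplicial, rather than re-blocked because all of its neighbouring grid positions happen to be occupied, or turned into a desired state by a neighbour that has just moved into view. Tracking how each agent's membership in $\mathcal{S}_{des}$, $\mathcal{S}_{blocked}$, $\mathcal{S}_{simplicial}$ and $\mathcal{S}_{active}$ shifts under Transition Types~1 and~2 along this chain is where the real work lies; the finiteness/reachability scaffolding above is comparatively routine.
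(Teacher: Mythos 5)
There is a genuine gap, and you have in fact located it yourself in your final paragraph: your argument never actually produces a simplicial active agent. The paper's proof does not rely on a dynamic ``activation wave'' for most patterns; it rests on a static, structural dichotomy that your proposal is missing. Starting from any blocked agent, trace a branch of the (finite) pattern: either the branch terminates or it closes into a loop. If it terminates, its terminal agent is a leaf, and a leaf is simplicial \emph{by definition} (Definition~\ref{definition:simplicial}), since removing it leaves its single clique of neighbors connected. Condition~1 then does purely combinatorial work: it forbids that leaf from lying in $\mathcal{S}_{des}\cap\mathcal{S}_{simplicial}$, and since $\mathcal{S}_{blocked}\cap\mathcal{S}_{simplicial}=\emptyset$, the leaf must be active and simplicial \textemdash{ }so the pattern is \emph{already} a node of $G_P^{AS}$, with no transitions required at all. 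Only in the residual case, where every branch closes into a loop and no leaves exist, are dynamics needed; there Condition~2 supplies the chain reaction: every agent in a loop is static with two singleton cliques, $G_\mathcal{S}^{2r}$ guarantees each becomes active under a neighbor's motion, and the loop collapses about a corner into a simplicial leaf.

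Your proposal instead tries to obtain the simplicial agent as the terminus of a wave of activations, and the step you flag as ``where the real work lies'' \textemdash{ }showing that the agent at the tip of a chain of ever-smaller cliques really is simplicial\textemdash{ }is precisely the content of the leaf observation above; without it your contradiction never closes. Two smaller points. First, your reading of Condition~1 as licensing a walk that ``eventually reaches an active agent'' inside a clique is not what the condition gives you: an agent that is not desired-and-simplicial may be blocked, or desired but not simplicial, so the walk can stall on static agents without forcing the all-static case and hence without forcing $P_{des}$. Second, your use of Condition~2 quietly upgrades ``$G_\mathcal{S}^{2r}$ contains an activating transition for this state'' to ``the particular safe moves available to this particular active neighbor realize that transition,'' a quantifier the paper also glosses over but on which your wave construction leans much more heavily. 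The contradiction scaffolding (the transition-closed set $\mathcal{T}$, finiteness, Lemma~\ref{lemma:active_states_present}) is fine; what is missing is the leaf/loop dichotomy that lets Condition~1 do its job statically and confines Condition~2 to the loop case.
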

    \begin{proof}
    	A blocked agent $i$ with state $s_i\in\mathcal{S}_{blocked}$ always has multiple agents surrounding it, or else it would not be blocked.
    	The neighbors of agent $i$ either form two or more cliques, or they form one clique that fully surrounds the agent in all sensed directions.
    	In either case, the pattern branches out in multiple directions that stem from agent $i$.
    	If we trace any branch, because only a finite number of agents $N_{des}$ exists, we have the two following possible situations:
    	\begin{enumerate*}
    		\item The branch eventually features an agent $j$ with state $s_j\in\mathcal{S}_{simplicial}$.
	    	In the extreme, this is the leaf of the pattern.
	    	Here, we can have two situations:
    		\begin{enumerate*}
    			\item $s_j\in \mathcal{S}_{des}\cap\mathcal{S}_{simplicial} $.
    			If this exists, then the simplicial agent is also static. 
    			Therefore, it is possible that the entire pattern does not feature \emph{any} active and simplicial agent.
    			\item If $\mathcal{S}_{des}$ cannot, by design, form the clique of a state in $\mathcal{S}_{blocked}$, then it is guaranteed that $s_j\not\in\mathcal{S}_{des}\cap\mathcal{S}_{simplicial}$.
    		\end{enumerate*}
	    	Therefore, we can \emph{locally} impose that situation (b) always occurs, that situation (a) never occurs, and we thus guarantee that $s_j\in\mathcal{S}_{active} \cap \mathcal{S}_{simplicial}$ 
        (this is the first condition of this Lemma).

    		\item If all branches only feature non-simplicial states, then this is only explained if the branches form loops, otherwise at least one leaf would be present as in situation 1 above.
    		However, it can be ensured that a loop will always collapse and feature one simplicial active agent.
    		In a loop, all agents have two cliques, each formed by one neighbor.
    		$G_\mathcal{S}^{2r}$ tells whether any static agent with two neighbors, by the action of its neighbors, will become active.
    		If this is the case for all states, then we know that the action of any neighbor will cause a chain reaction about the loop which will eventually cause the loop to collapse about one corner point and create a simplicial leaf.
        This is the second condition of this Lemma.
        The collapse of two exemplary loops is depicted in \figref{fig:loopcollapse}.
    	\end{enumerate*}
    	In summary, by creating the conditions such that situation 1(a) never occurs, we restrict the possible patterns that can exist outside of $G_P^{AS}$ to patterns with only loops (situation 2).
      If $P_0$ is a loop, then through $G_\mathcal{S}^{2r}$, we know that loop patterns will collapse into a pattern that exists within $G_P^{AS}$.
      Else, $P_0$ is not a loop and it already exists within $G_P^{AS}$.
    	This means that any pattern $P_0$ will either exist within $G_P^{AS}$, or will transition into $G_P^{AS}$.
    \end{proof}
   
    \begin{figure}[t!]
    \centering
      \includegraphics[width=0.9\textwidth]{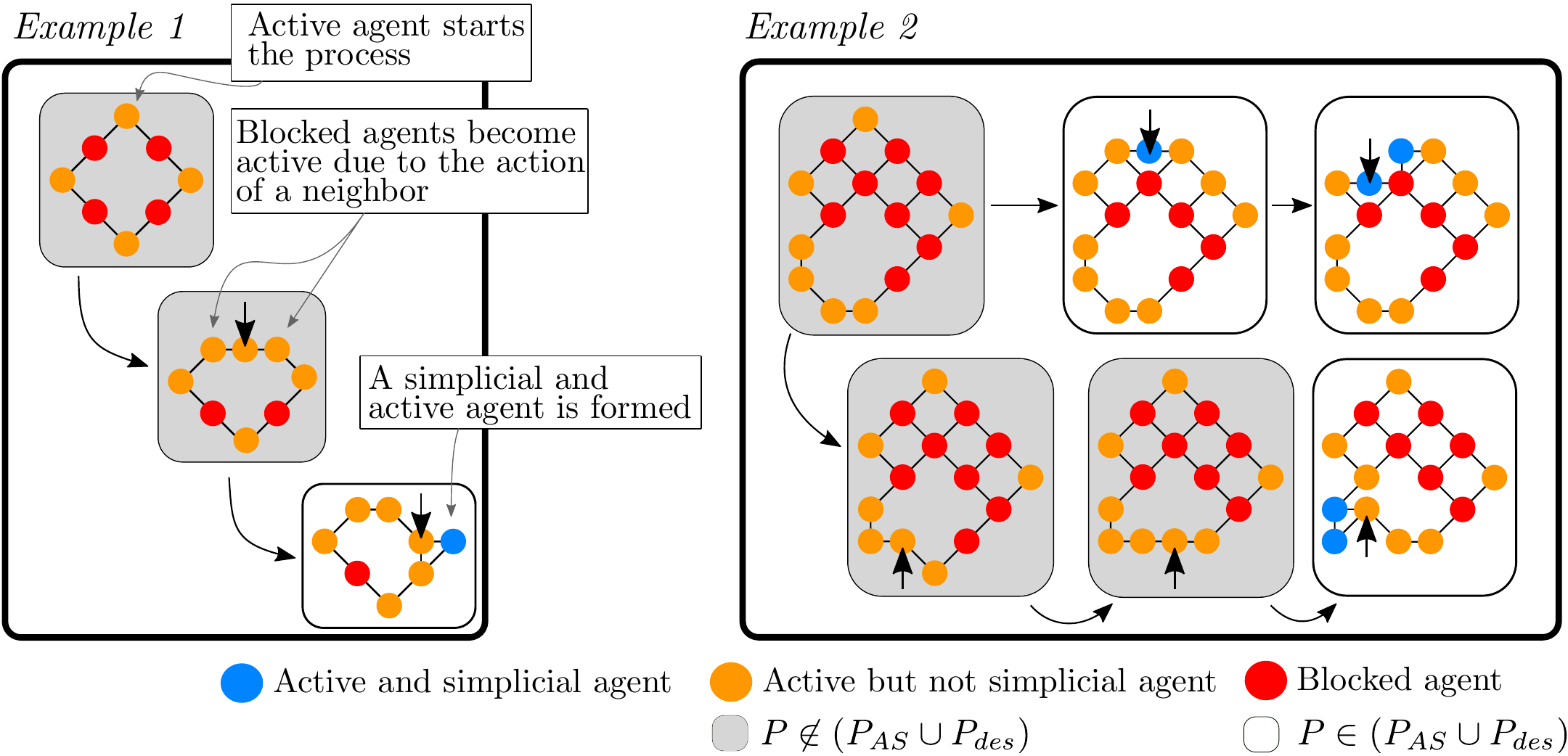}
      \caption{Illustration of two exemplary loops that ``collapse''. Notice that the active states present at the borders cause a chain reaction until eventually a simplicial active agent is present.
      This is a property that can be determined by inspecting $G_\mathcal{S}^{2r}$, which will show that the static agents will become active and propel the chain reaction.}
      \label{fig:loopcollapse}
    \end{figure}

    \begin{figure}[ht]
     \centering
      \begin{subfigure}[t]{0.19\textwidth}
        \centering
        \includegraphics[width=\textwidth]{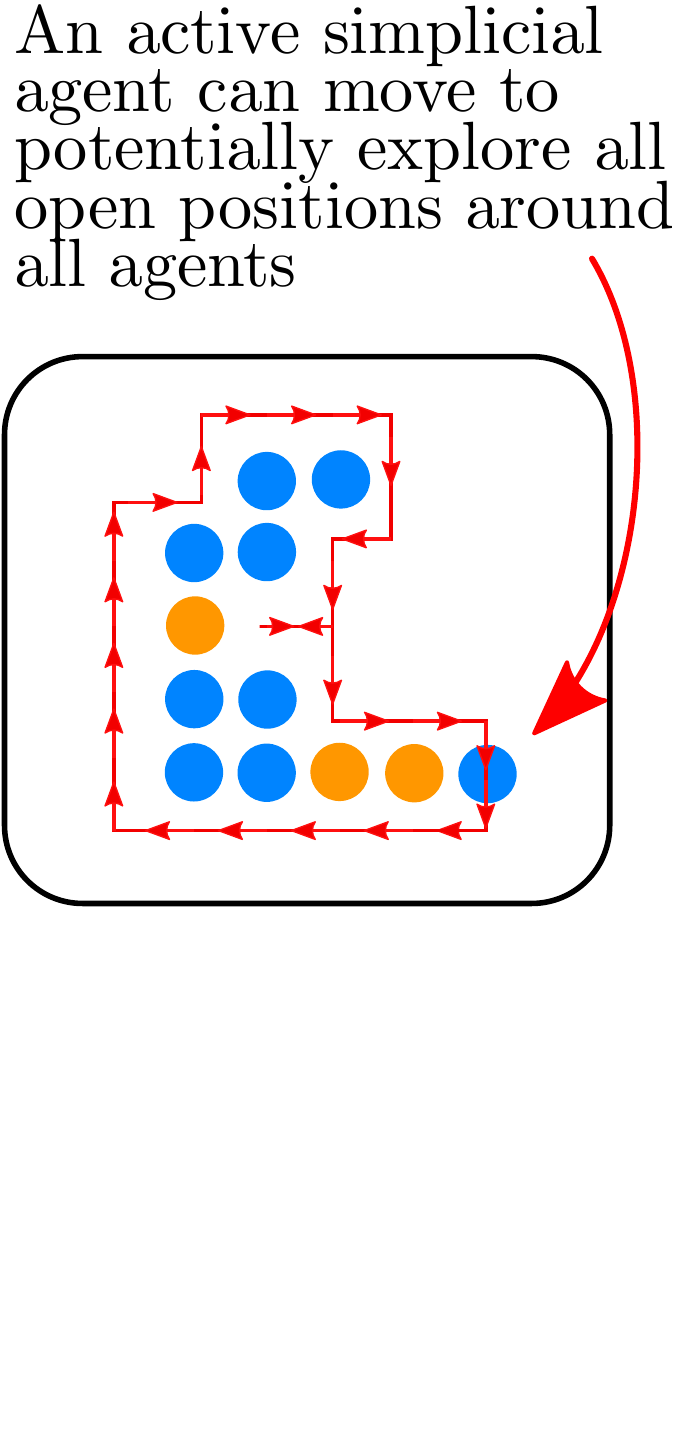}
        \caption{Simplicial agent that can travel to all open positions in the pattern.}
        \label{fig:allopenpositions}
      \end{subfigure}
      \hspace{5mm}
      \begin{subfigure}[t]{0.74\textwidth}
     \includegraphics[width=\textwidth]{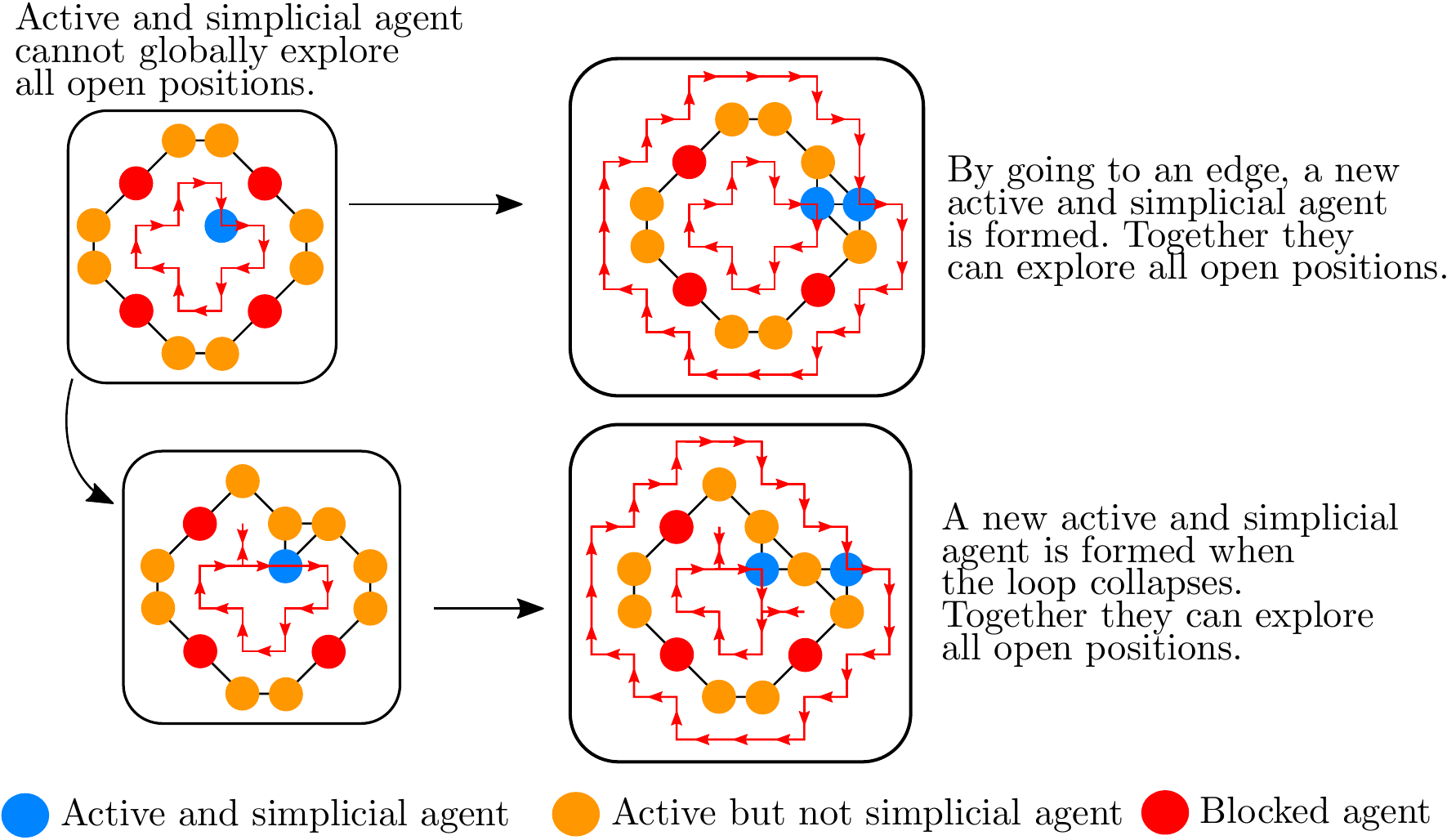}
      \caption{Two possibilities for how, should the agent be globally surrounded in a loop and unable to travel to all open positions, then this a new simplicial and active agent will take over.}
      \label{fig:loopcollapse_agentstuck}
      \end{subfigure}
      \caption{Illustration of how active and simplicial agents can travel to all open positions in the structure}
    \end{figure}

    With the conditions from Lemma \ref{lemma:activeandsimplicialpresent} we ensure that a simplicial active agent will always be present regardless of $P_0$.
    We now introduce Theorem \ref{theorem:p0pdes}, which we use to determine that $P_{des}$ will eventually form from $P_0$ and eliminate the chance for any deadlocks.

   \begin{theorem}
      \label{theorem:p0pdes}
      If the following conditions are satisfied:
      \begin{enumerate*}
      	\item $P_{des}$ is achievable,
      	\item $G_P^{AS}$ can be reached from any initial pattern in $G_P$,
	      \item $G_\mathcal{S}^1$
	      shows that any agent in state 
	      $s\in \mathcal{S}_{active}\cap\mathcal{S}_{simplicial}$
	      can move to explore all open positions surrounding its neighbors,
	      \item 
	      $G_\mathcal{S}^3$
	      shows that any agent in any state 
	      $s\in\mathcal{S}_{static}$ 
	      will always, by the arrival of a new neighbor in an open position, transition into an active agent (with the exception of any agent that is, or becomes, surrounded),
	    \end{enumerate*}
      then $P_{des}$ can be reached from any initial pattern $P_0$.
    \end{theorem}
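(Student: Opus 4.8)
The target is to show that the global transition graph $G_P$ has, from every node, a deadlock-free path to the node $P_{des}$, since (as noted in the text preceding the theorem) this is exactly what guarantees that $P_{des}$ is reached from any initial pattern $P_0$. The plan is to exhibit, from any pattern reachable from $P_0$, a finite sequence of safe actions that strictly increases a progress measure — the number of agents that have been ``locked into'' their final positions within a growing copy of $P_{des}$ — so that after at most $N_{des}$ such rounds the configuration equals $P_{des}$. Because a progress-increasing move is always available, the swarm can never be confined to a recurring cycle of patterns, so no deadlock in the sense of Definition~\ref{definition:deadlock} occurs along the constructed path. The first step is to apply condition~2: from any node of $G_P$ we can reach a node of $G_P^{AS}$, so we may assume the current pattern is either $P_{des}$ (done) or contains an agent in a state $s\in\mathcal{S}_{active}\cap\mathcal{S}_{simplicial}$.

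Next I would set up the ``free agent'' mechanism. Take an active simplicial agent $a$. By Definition~\ref{definition:simplicial}, when $a$ leaves its neighborhood the remaining $N_{des}-1$ agents stay connected, forming a connected structure $C$; and by condition~3, $G_\mathcal{S}^1$ guarantees that $a$ can travel around $C$ and occupy any open position adjacent to it, using only actions in $\mathcal{Q}_{safe}$. Hence $a$ behaves as a mobile unit that can be docked anywhere on the boundary of $C$ while keeping the swarm connected and collision-free. Together with condition~1 (achievability), this means $a$ can be brought to a position where it, together with a neighbor, realizes a pair of adjacent desired states that appears in $P_{des}$, seeding a minimal locked sub-configuration $D\subseteq P_{des}$.

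The core of the argument is then an induction that assembles $P_{des}$ one agent at a time while maintaining $D$ as an invariant connected sub-configuration of $P_{des}$ whose agents are never moved again. Given $D$ with $|D|<N_{des}$, use condition~2 again to obtain a fresh active simplicial agent, route it (via condition~3) along the exterior of $D\cup C$ to the next open position prescribed by $P_{des}$, and let it adopt the corresponding desired state (possible by achievability). Agents that are \emph{static but misplaced} — in $\mathcal{S}_{static}$ yet not part of the intended copy of $P_{des}$ — do not block this: by condition~4, bringing a new neighbor adjacent to such an agent makes it active again (the stated exception being agents that are genuinely surrounded, which only occurs inside a fully packed region that $G_\mathcal{S}^3$ accounts for), so they can be peeled off and reused. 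Each completed docking increases $|D|$ by one; once $|D|=N_{des}$, the pattern is exactly $P_{des}$.

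The main obstacle I anticipate is establishing the \emph{monotonicity} of this assembly: one must show that routing the free agent to the next docking site and waking up and relocating misplaced static agents can always be carried out without disturbing the already-locked set $D$, and without exhausting the supply of active simplicial agents for the next round. This is where simpliciality and condition~3 must be used delicately — the free agent has to be steered strictly along the exterior of $D$, and the order in which agents are committed to $P_{des}$ must be chosen so that each newly placed agent is simplicial with respect to $D$ at the moment of placement (i.e., a ``peeling order'' that is the reverse of an order in which simplicial agents could be successively removed from $P_{des}$). The boundary case in which every remaining mobile agent becomes surrounded before $D$ is complete would be reduced, via condition~4 together with the loop-collapse reasoning already used for Lemma~\ref{lemma:activeandsimplicialpresent}, to re-establishing an active simplicial agent, which closes the induction.
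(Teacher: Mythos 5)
Your overall framing (reduce to showing $G_P$ has a deadlock-free path from every node to $P_{des}$, enter $G_P^{AS}$ via condition~2, and exploit a free active simplicial agent that can dock at any open position via condition~3) matches the paper's setup, but from there you take a genuinely different route: you try to build $P_{des}$ monotonically, one agent at a time, around a ``locked'' sub-configuration $D$ that is never disturbed. The paper does not do this. Its proof never exhibits monotone progress; it instead argues that (i) a free active simplicial agent can visit \emph{all} open positions of the structure, (ii) by $G_\mathcal{S}^3$ it wakes static agents as it passes, and a case analysis on the resulting ``baton passing'' plus the loop-collapse of Lemma~\ref{lemma:activeandsimplicialpresent} rules out every deadlock cycle, so that (iii) the swarm keeps transitioning through \emph{all} achievable patterns and, $P_{des}$ being achievable and in $G_P$, is eventually formed given infinite time. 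That is an exhaustive-exploration argument, not an assembly argument.

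The gap in your version is that the invariant $D$ cannot be maintained, and the obstruction is condition~4 itself. Static agents are re-activated by the arrival of any new neighbor in an open position, and the agents are anonymous, reactive, and without global position, so a correctly-placed member of $D$ is indistinguishable (to itself) from a misplaced static agent; there is no mechanism by which it can decline to become active, and there is no globally agreed embedding of $P_{des}$ that would even define ``the next open position prescribed by $P_{des}$.'' Consequently, routing the free agent along the exterior of $D$ to its next docking site will wake members of $D$, which then move, destroying your progress measure --- the very peeling mechanism you rely on to recycle misplaced agents also unlocks the locked set. You correctly flag monotonicity as the main obstacle, but it is not a technical detail to be patched: in this model the measure $|D|$ is not monotone along any realizable path, which is precisely why the paper abandons constructive assembly in favor of showing that the reachable set from every configuration contains $P_{des}$ and that no pattern cycle is absorbing. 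As written, your argument establishes the setup steps but not the inductive step, so it does not yet prove the theorem.
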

    \begin{proof}
      In the following, we will show that any pattern in $G_P^{AS}$ will keep transitioning until it forms the desired pattern $P_{des}$.
    	Consider a swarm of $N_{des}$ agents arranged in a pattern $P_0$.
    	If $P_{des}$ is achievable, via Lemma \ref{l:achievability}, it can be reached by the actions of the agents, meaning that the node $P_{des}$ is in $G_P$
      (this is the first condition in this theorem).
	    Through Lemma \ref{lemma:activeandsimplicialpresent} we know that we can always get to a condition where one active and simplicial agent is present, such that we are in the graph $G_P^{AS}$ 
      (this is the second condition in this theorem).
	    We observe the case where at least one agent, agent $i$, exists with state $s_i\in\mathcal{S}_{active}\cap\mathcal{S}_{simplicial}$.
	    As agent $i$ moves, one of the following events can happen:
    	\begin{enumerate*}
    		\item Agent $i$ enters a state 
    		$s_i'\not\in\mathcal{S}_{simplicial}$.
        Via Lemma \ref{lemma:activeandsimplicialpresent}, at least one other agent is (or will be) in state
    		$s\in\mathcal{S}_{active}\cap\mathcal{S}_{simplicial}$.

    		\item Agent $i$ enters a state 
    		$s_i'\in\mathcal{S}_{static} \cap \mathcal{S}_{simplicial}$.
			  If $P_{des}$ is not yet achieved, then at least one other agent in the swarm is in an active state (Lemma \ref{lemma:active_states_present}).
    		If the active agent(s) are in state $s\in\mathcal{S}_{active} \cap \mathcal{S}_{\lnot simplicial}$, then this takes us back to point 1 in this list.
    		If the active agent(s) are in state $s\in\mathcal{S}_{active} \cap \mathcal{S}_{simplicial}$, 
    		this takes us to point 3 in this list.

    		\item Agent $i$, and/or agent(s) taking over, keeps moving and each time enters a state $s_i'\in\mathcal{S}_{active} \cap \mathcal{S}_{simplicial}$.
    		Via $G_\mathcal{S}^1$ we know that it can potentially explore all open positions surrounding all its neighbors (this is the third condition of this theorem).
    		As it moves, its neighbors also change, such that it always can potentially explore all open positions around all agents, and thus \emph{all open positions in the pattern} (see \figref{fig:allopenpositions} for a depiction).
        This means that the swarm can evolve towards a pattern that is closer to the desired one.
    	\end{enumerate*}
    		Therefore, any situation will always develop into the situation of point 3.
    		This is free of deadlocks, as all possible deadlock situations are mitigated:
  		\begin{enumerate*}
  
  			\item It may happen that the simplicial and active agents cannot actually visit all open positions in the swarm because, at the global level, it is enclosed in a loop by the other agents.
    		By Lemma \ref{lemma:activeandsimplicialpresent}, the loop will always collapse, meaning that at least one active simplicial agent will be freed, or that a new active simplicial agent will form.
        The new agent will be able to travel to all positions external to the loop, avoiding a deadlock.
        This is depicted in \figref{fig:loopcollapse_agentstuck}.
    		
    		\item Agent $i$ can travel about all open positions in the swarm as expected.
  			Via $G_\mathcal{S}^3$, we can extract that this must cause at least one static agent to become active (this is the fourth condition of this theorem).
  			Consider a static agent $j$ which becomes active when $i$ becomes its neighbor.
    		This may lead to one of the following developments, all of which avoid deadlocks.
    		\begin{enumerate*}
	    		\item Agent $i$ remains active and simplicial.
	    		The pattern can evolve even further and a deadlock is trivially avoided.
	    		\item Agent $i$ becomes static upon neighboring agent $j$, prior to the departure of agent $j$.
	    		In this case, either it will be freed by the departure of agent $j$, taking us back to point 2(a) in this list, or else it will remain static following the departure of agent $j$, taking us to point 2(c) in this list.
	    		\item Agent $i$ becomes static upon neighboring agent $j$, following the departure of agent $j$.
	    		It is now agent $j$ that can explore all open positions in the swarm, and further continue the process elsewhere.
	    		It is not possible that agent $j$ can uniquely come back to its original position, because by analysis of $G_\mathcal{S}^3$ we know that agent $j$ can free any static agent in the swarm, and not just agent $i$.
	    		\item Agent $i$, while agent $j$ is moving, enters the position
	    		(and state) that was originally taken by agent $j$.
	    		As in point 2(c) in this list, it is not possible that agent $j$ only frees agent $i$ in the same way that agent $i$ freed agent $j$, because $G_\mathcal{S}^3$ shows that agent $j$ can free any agent in the swarm, and not just agent $i$.
	    	\end{enumerate*}
	    	There is an exception to the rule, which are static states that either are, or become, surrounded by other agents.
	    	In this case, $G_\mathcal{S}^3$ may not show that they can become free.
	    	However, it is trivially impossible (since there is a finite number of agents) for the swarm to only feature agents that are surrounded.
        A situation where \emph{all} agents are \emph{all} surrounded cannot occur; at least one agent will not be surrounded.
        This justifies the exception to the fourth condition in this theorem.
	    \end{enumerate*}
  		With the above it is confirmed that
  		1) any open position in the pattern can potentially be filled, and 
  		2) no deadlocks will arise.
  		This means that the swarm will keep evolving into all achievable patterns.
  		Therefore, \emph{any} pattern in $G_P^{AS}$, including $P_{des}$, will \textemdash{ }given infinite time \textemdash{ }always eventually be formed starting from any other pattern in $G_P$.
    \end{proof}

    In this section, we presented a local proof that ensures that the desired pattern will be reached.
    We showed that, by ensuring a set of local conditions, we can determine that the pattern will be achieved from any initial configuration of the swarm.
    One of the main conditions is the need for simplicial active states, which brings interesting insights.
    The dependence on the set $\mathcal{S}_{des}$ leads to limitations on the desired patterns that may be reached independently of $P_0$.
    We note the following:
    \begin{itemize*}
      \item Desired states with only one neighbor may violate the first condition of Lemma \ref{lemma:activeandsimplicialpresent}.
      This is because this desired state can form the clique of a blocked state on its own.
      If this occurs, the local proof presented here is too restrictive to guarantee that the desired pattern will be formed without deadlocks.
      \item Removing a dependency on North (Assumption A\ref{a:north}) may lead to violating the first condition of Lemma \ref{lemma:activeandsimplicialpresent}.
      This is because states become rotation invariant, as discussed further in \secref{sec:discussion_thenorthdependency}.
    \end{itemize*}

   As it stands, the proof rests on the assumption that the desired pattern $P_{des}$ is the only pattern that can be formed where all agents are in a static state.
   In the following section, we will present a method to check for an arbitrary set of static states that a desired pattern is indeed the unique pattern in which all agents are static.

\section{Checking if the emergent pattern is unique}
\label{sec:implementation}

  There is a need to assess whether a swarm of robots with a set of static states $\mathcal{S}_{static}$ will uniquely form the desired pattern $P_{des}$.
  In this section we detail our implementation to check this for an arbitrary set $\mathcal{S}_{static}$.
  We focused on the case where all agents have omni-directional sensors, for which there exist several simplifying assumptions that enable a fast reduction of the search space.
  For a swarm of $N$ agents with a set $\mathcal{S}_{static}$ consisting of $\abs{\mathcal{S}_{static}}=N_S$ states, the states could coexist in $n_c$ combinations, where
  \begin{equation}
    n_c=\frac{(N_S+N-1)!}{N!(N_S-1)!}.
    \label{eq:nc}
  \end{equation}
  For each combination of states, there (may) exist multiple ways in which the states could be organized.
  To filter possible combinations and spatial arrangements,
  we used the implementation shown in \figref{fig:implementation}.
  We first assess whether a combination of states is viable.
  If so, we check for all different spatial arrangements of the states whether a spanning tree can exist with no loose edges (an edge where two or more states do not match).
  If there are no loose edges, then we examine the pattern to see if it is $P_{des}$.
  If this is not the case, then a counter-example has been found.
  If a counter-example is found, it means that the sensor layout is insufficient to guarantee that the desired pattern is the unique pattern that will be formed.
  
  \begin{figure}[ht]
    \centering
    \includegraphics[width=\textwidth]{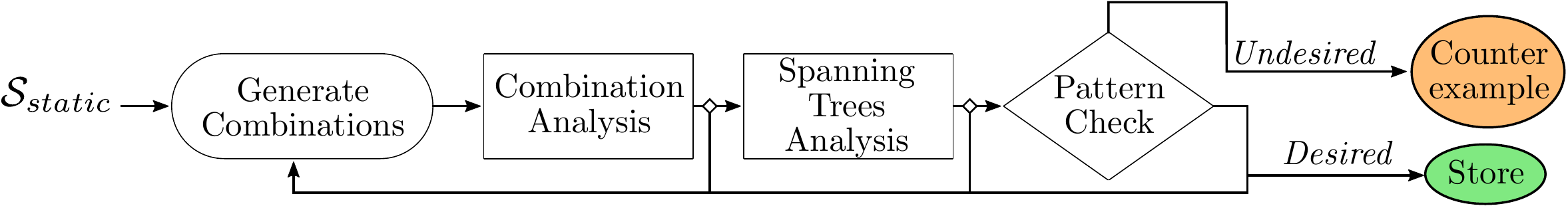}
    \caption{Diagram of automatic checker that checks whether $\mathcal{S}_{static}$, for a fixed number of agents $N$, can only form the desired pattern.}
    \label{fig:implementation}
  \end{figure}

  \subsection{Preliminaries}
  \label{sec:matchmatrices}
    Consider a set $\mathcal{S}_{static}$ with $\abs{\mathcal{S}_{static}}=d$.
    We introduce two tools to describe how any pair of states in $\mathcal{S}_{static}$ can be matched:
    the
    \textbf{Link-Direction matrix}
    (Definition \ref{def:linkdirectionmatrix}) and the
    \textbf{Match matrix}
    (Definition \ref{def:matchmatrix}).
    \begin{definition}
      \label{def:linkdirectionmatrix}
	  The \textbf{Link-Direction matrix} $D$ is a square matrix ($d\times d$) that holds the links (Definition \ref{definition:link}) along which any two states in $\mathcal{S}_{static}$ match (Definition \ref{def:match}).
    \end{definition}

    \begin{definition}
      \label{def:matchmatrix}
      The \textbf{Match matrix} $M$ is a matrix that holds the \emph{number of} links (Definition \ref{definition:link}) along which any two states in $\mathcal{S}_{static}$ match (Definition \ref{def:match}).
	  For omni-directional sensors, $M$ is symmetrical.
      Intuitively, this is because if agent $i$ sees agent $j$, then agent $j$ can also see agent $i$.
    \end{definition}

  \begin{figure}[t]
    \centering
    \includegraphics[width=0.8\textwidth]{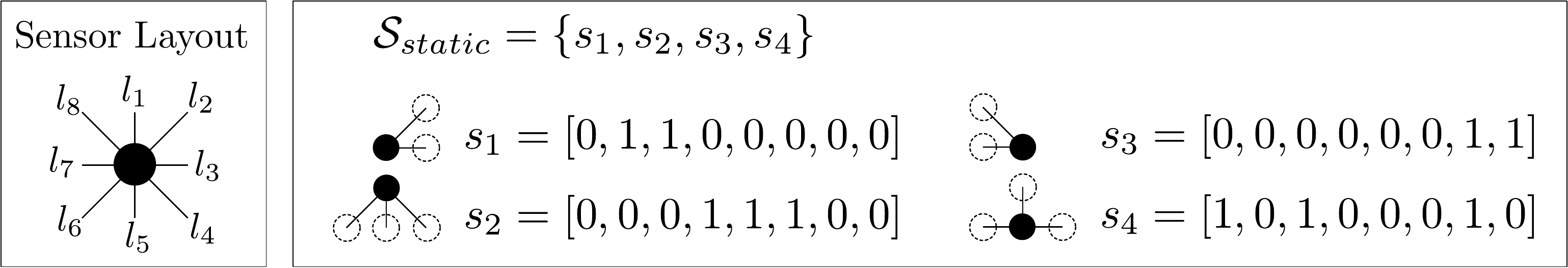}
    \caption{Arbitrary set $S_{static}$ used for examples in \secref{sec:matchmatrices} and \secref{sec:combinationanalysis}}
    \label{fig:ch5example}
  \end{figure}

    \paragraph{Example}
      Consider the set $\mathcal{S}_{static}=\{s_1,s_2,s_3,s_4\}$ depicted in \figref{fig:ch5example}.
      For this set:
      \begin{align}
      \renewcommand{\arraystretch}{1.3}
        D (\mathcal{S}_{static})= 
        \begin{Bmatrix}
           -  &  [l_2] &  -  & [l_3] \\
          [l_6] &   -  & [l_4] & [l_5] \\
           -  &  [l_8] &  -  & [l_7] \\
          [l_7] &  [l_1] & [l_3] &  -  
        \end{Bmatrix}
        &&
      \renewcommand{\arraystretch}{0.9}
      M(\mathcal{S}_{static}) = 
        \begin{bmatrix}
          0  & 1  & 0  & 1  \\
          1  & 0  & 1  & 1  \\
          0  & 1  & 0  & 1  \\
          1  & 1  & 1  & 0  \\
        \end{bmatrix}
        \nonumber
      \end{align}
      Note that all 0 entries in $M(\mathcal{S}_{static})$ correspond to empty entries in $D (\mathcal{S}_{static})$.
      From $M(\mathcal{S}_{static})$ we can quickly extract that state $s_1$ can never connect to itself, but it can connect to states $s_2$ and $s_4$.
      With $D (\mathcal{S}_{static})$ we can see that $s_1$ can match with $s_2$ along $l_2$, and with $s_4$ along $l_3$.
      Note that $D (\mathcal{S}_{static})$ matrix, although not strictly symmetric, also has a symmetry to it: each link always features, at its symmetry position, a link along the opposite direction.
      For example, if $s_1$ matches with $s_2$ along direction $l_2$, then $s_2$ matches with $s_1$ along $l_6$.\\

      If $\mathcal{S}_{static}$ includes a state that does not match with any state in the set, this will be seen as a null row in $M(\mathcal{S}_{static})$.
      If it can match, then $D (\mathcal{S}_{static})$ will show whether all its links can be matched by the other states in $\mathcal{S}_{static}$.
      If these requirements are not met, then the state can be excluded from analysis since it can never coexist with the other states.

  \subsection{Combination Analysis}
  \label{sec:combinationanalysis}

    \paragraph{Completeness Test}
    \label{sec:combination_completenesstest}
      A complete combination satisfies the following conditions:
      \begin{enumerate*}
        \item \textbf{The graph is complete.}
          Each link in any one direction should have a link in the opposite direction that will match it.
          Furthermore, any valid combination should consist of an even number of agents expecting an odd number of neighbors \citep{vansteen2010graph}.
        \item \textbf{The pattern is finite along all directions.}
          For each direction, there should be at least one state that does not require a link along that direction.
        \item \textbf{The edges of the pattern exist.}
          For each direction, there must be at least one state that features a link in that direction, but not in the opposite direction.
      \end{enumerate*}

    \paragraph{Matching Test}
      Each state in a combination should be capable of being matched by the other states in a combination.
      This information is provided by the Match Matrix.
      The reasoning is best explained via an example.
        Consider, for a swarm of 5 agents with $\mathcal{S}_{static}$ as the example in \secref{sec:matchmatrices}/\figref{fig:ch5example}, a potential combination
        $C_i = \{ s_1, s_1, s_2, s_3, s_3\}$.
        Using $M(\mathcal{S}_{static})$, we observe pair-wise matches that are possible between the states in $C_i$.
        $M$ tells us that $s_1$ can \emph{only} connect to $s_2$ in one direction.
        However, $C_i$ features \emph{two} instances of $s_1$ and \emph{only one} instance of $s_2$.
        This means that one instance of $s_1$ can never be satisfied; the combination can never exist.
        Furthermore, there should be enough states that match to $s_2$ in order to accommodate all its links.
        If there are too few states that match $s_2$, then we know that $s_2$ can never be satisfied in full, and the combination is also not valid.

  \subsection{Spanning Trees Analysis}
  \label{sec:permutationtest_1}
    If a combination passes the combination analysis, we construct and test spanning trees to determine whether and how the states could form a pattern.
    Spanning trees graphs are used here as a convenient tool to express how a pattern expands through space starting from an arbitrary root node.
    Let $T_i(C_k)$ represent an arbitrary spanning tree generated from a combination $C_k$.
    The nodes of $T_i$ are the states in $C_k$, and the edges of $T_i$ are one of the links between the states.
    With the tests below, we first test higher level properties of a generated spanning tree.
    If these properties are met then we test the spanning tree spatially to determine if all states match in full.
    Examples of trees that fail or pass the conditions are shown in \figref{fig:spanning_tree}.

    \begin{figure}[t]
      \centering
        \includegraphics[width=0.75\textwidth]{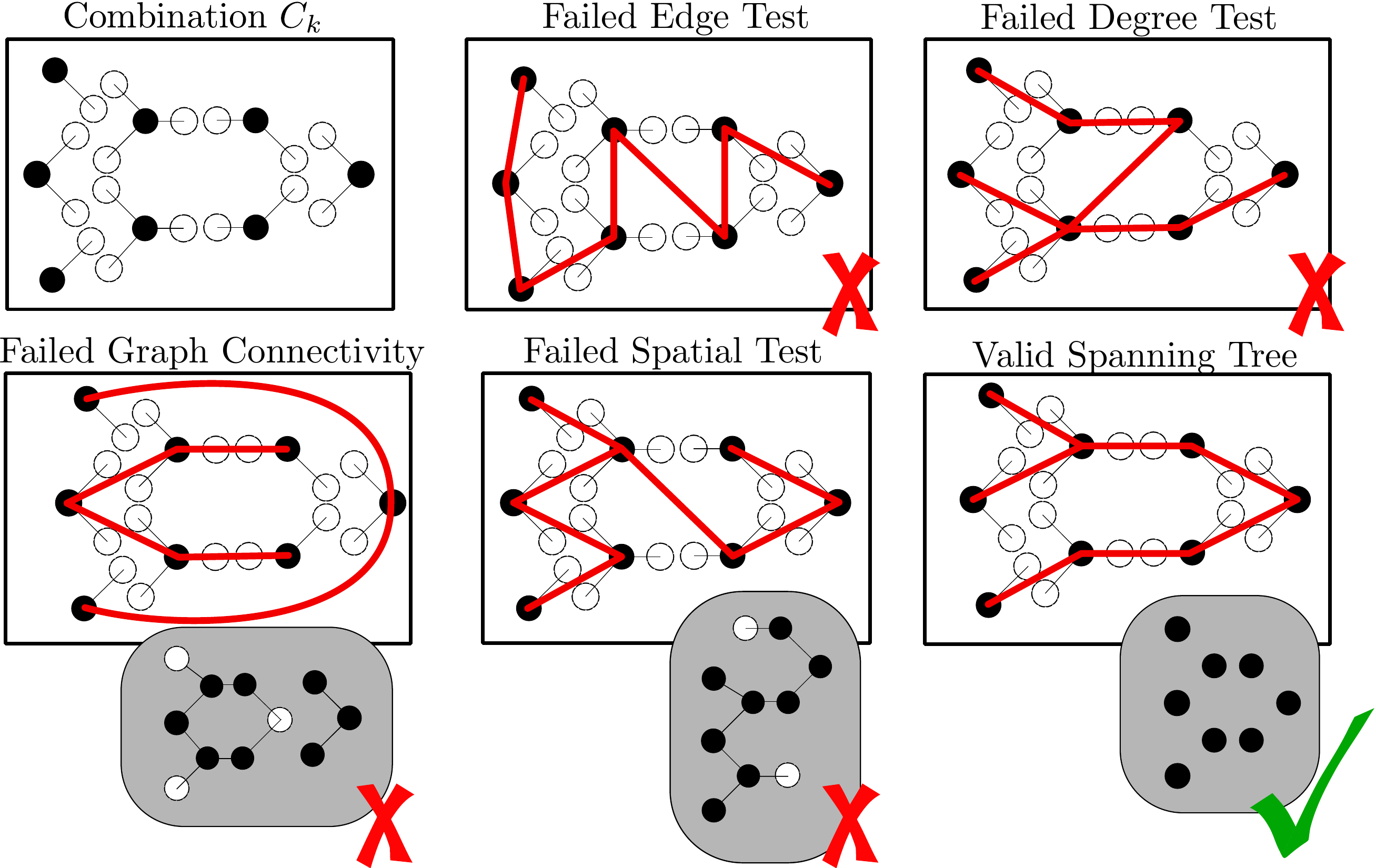}
      \caption{Representation of a valid spanning tree used to describe a pattern}
      \label{fig:spanning_tree}
    \end{figure}

    \begin{enumerate*}

    \item \textbf{Graph Edge test.}
      If $M(\mathcal{S}_{static})$ shows that any of the edges in $T_i(C_k)$ cannot exist (because a match between those states does not exist), then $T_i(C_k)$ is invalid.
      These spanning trees can be discarded.

    \item \textbf{Degree test.}
     The degree of state should be less than or equal to the number of links that the state holds.
      If the degree of a node in $T_i(C_k)$ is larger than the degree of the state, then $T_i(C_k)$ is invalid.
      These spanning trees can be discarded.

    \item \textbf{Compression.}
      If a combination features the same state multiple times, multiple spanning trees that created from the combination, such as $T_i(C_k)$ and $T_j(C_k)$, can be duplicates of each other.
      They are duplicates because the agents connected have the same state, but because all agents are homogeneous and anonymous this does not matter.
      All duplicate spanning trees can be ignored and only one needs to be analyzed.

    \item \textbf{Graph Connectivity.}
      It has been established that the swarm cannot disconnect, meaning that any pattern must have a connected spanning tree.
      If $T_i(C_k)$ is not connected, then it is invalid.
      
    \item \textbf{Spatial test.}
      Spanning trees that meet all conditions above are plotted in space and checked to make sure that all states match in full without loose ends.
      $D(\mathcal{S}_{static})$ can be used to quickly generate the full pattern.
    \end{enumerate*}

  \subsection{Pattern Check}
  \label{sec:traceanalysis}
    If a valid spanning tree is identified, a possible pattern has been found.
    The pattern needs can be checked to determine whether it is equal to the desired pattern $P_{des}$.
    A variety of methods can be used to do so automatically \citep{loncaric1998survey}.
    In our work, we used Fourier descriptors for plane closed curves to examine the contour of the pattern \citep{zahn1972fourier} and check against the contour of the desired pattern.

\section{Discrete space and discrete time simulations}
\label{sec:gridsimulations}
  In this section, the generation of different patterns by swarms is demonstrated and evaluated together with an exploration on how a further adaptation of the behavior may speed up the convergence to a desired pattern.
  The latter leads to insights on possible optimization strategies, which will be discussed in \secref{sec:discussion_optimization}.

  \subsection{Simulation environment and test description}
    The agents exist in an unbounded two dimensional grid world.
    The sensor layout is omni-directional and extends up to the nearest grid points, mimicking Case 2 in \figref{fig:state_space_design_options} or the example from \figref{fig:ch5example}.
    The agents can take action omni-directionally as seen in \figref{fig:action_space_design_options}.
    For the purposes of this simulation, to ensure that the swarm fully abides to Proposition \ref{proposition:oneatatime}, only one agent moves at any given time step (this is an assumption that will be lifted in the next section).
    At each time step, one random active agent in the swarm performs an action and moves to a new grid point.
    All tests begin by initializing the agents in a random formation and are repeated 100 times.
    We explored the formation of:
      \begin{inparaenum}[1)]
        \item a triangle with 4 agents,
        \item a triangle with 9 agents, and
        \item a hexagon with 6 agents
      \end{inparaenum}
    under the following behaviors:
    \begin{itemize*}
    \item \emph{Baseline}:
      At each time step, a random agent with state $s\in\mathcal{S}_{active}$ is selected and executes a random safe action based on $\mathcal{Q}_{safe}$.

    \item \emph{Alteration 1 (ALT1)}:
      same as baseline; however, when an agent moves at time-step $k$, the same agent will not move at time-step $k+1$ (unless it is the only active agent).

    \item \emph{Alteration 2 (ALT2)}:
      same as ALT1; additionally, all states with more than 5 neighbors are now included in $\mathcal{S}_{blocked}$.

    \item \emph{Alteration 3 (ALT3)}:
      same as ALT2; additionally, all actions must ensure that all agents in the neighborhood, following the action, have at least one neighbor at North, South, East or West, else the action is discarded from $\mathcal{Q}_{safe}$.
      There is only one exception to this, and it is the state
          $s = \begin{bmatrix} 1 & 0 & 1 & 0 & 0 & 0 & 1 & 0 \end{bmatrix}$, for which otherwise a spurious pattern was found following the procedure in \secref{sec:implementation}.

    \item\emph{Alteration 4 (ALT4)}:
      same as ALT3; additionally, all states with more than 4 neighbors are now also included in $\mathcal{S}_{blocked}$.
    \end{itemize*}

    The motivation behind the different behaviors is to explore which parameters may have an influence over how many steps it takes to form the pattern, on average.
    The reasoning behind ALT3 and ALT4 is to force the agents to ``cut-corners'', as well as to give the agents less actions to choose from.
    ALT3 and ALT4 are such that the desired stated that compose the hexagon cannot be achieved.
    Therefore, based on Lemma \ref{l:achievability}, the hexagon should not be achievable by these controllers.
    We further note that ALT3 and ALT4 also do not meet condition 3 of Theorem \ref{theorem:p0pdes}, because some active simplicial agents are prevented from exploring all open positions surrounding their neighbors.
    However, as discussed in \secref{sec:pursuitofhappiness}, the local conditions in Lemma \ref{lemma:activeandsimplicialpresent} and Theorem \ref{theorem:p0pdes} are more restrictive than required and do not necesserily apply to all global patterns.
    We will also use these simulations to explore this point.

  \subsection{Results}
  	Distributions for the number of steps to completion are shown in \figref{fig:idealizedsimulations_shapeanalysis_steps}.
  	For the Baseline, ALT1, and ALT2, the final pattern is achieved in all tests.
    As the size of the pattern grows, ALT1 is seen to provide for a better performance.
    This is explained by the fact that it limits the possibility that an agent cycles back and forth between two spots, which is inherently inefficient.
    ALT2 further improves the results; blocking all states with 5 neighbors reduced the size of $G_P$ such that the swarm had less patterns to explore.
    ALT3 and ALT4 further reduced $G_P$, leading to significant boosts in performance.
    However, as expected through Lemma \ref{l:achievability}, ALT3 and ALT4 did not work on the hexagon configuration \textemdash{ }the hexagon pattern did not emerge and the swarm continued randomly reshuffling.
    This gives empirical confirmation of Lemma \ref{l:achievability} and also provides practical insight into how tuning the state-action space can be beneficial for some patterns, but detrimental to others.
    We also note that, even though condition 3 of Theorem \ref{theorem:p0pdes} was not met by ATL3 and ATL4, they still managed to achieve the pattern in all cases.
    This shows that the local proof, as presented in this paper, can be too restrictive and needs to be inspected further if one wishes to optimize the performance of the system (alternatively, it could also be possible that the agents were simply ``lucky'' to not encounter deadlock situations during any of our simulations).

    \begin{figure}[t]
      \centering
      \begin{subfigure}[t]{0.18\textheight}
        \centering
        \includegraphics[height=0.2\textheight]{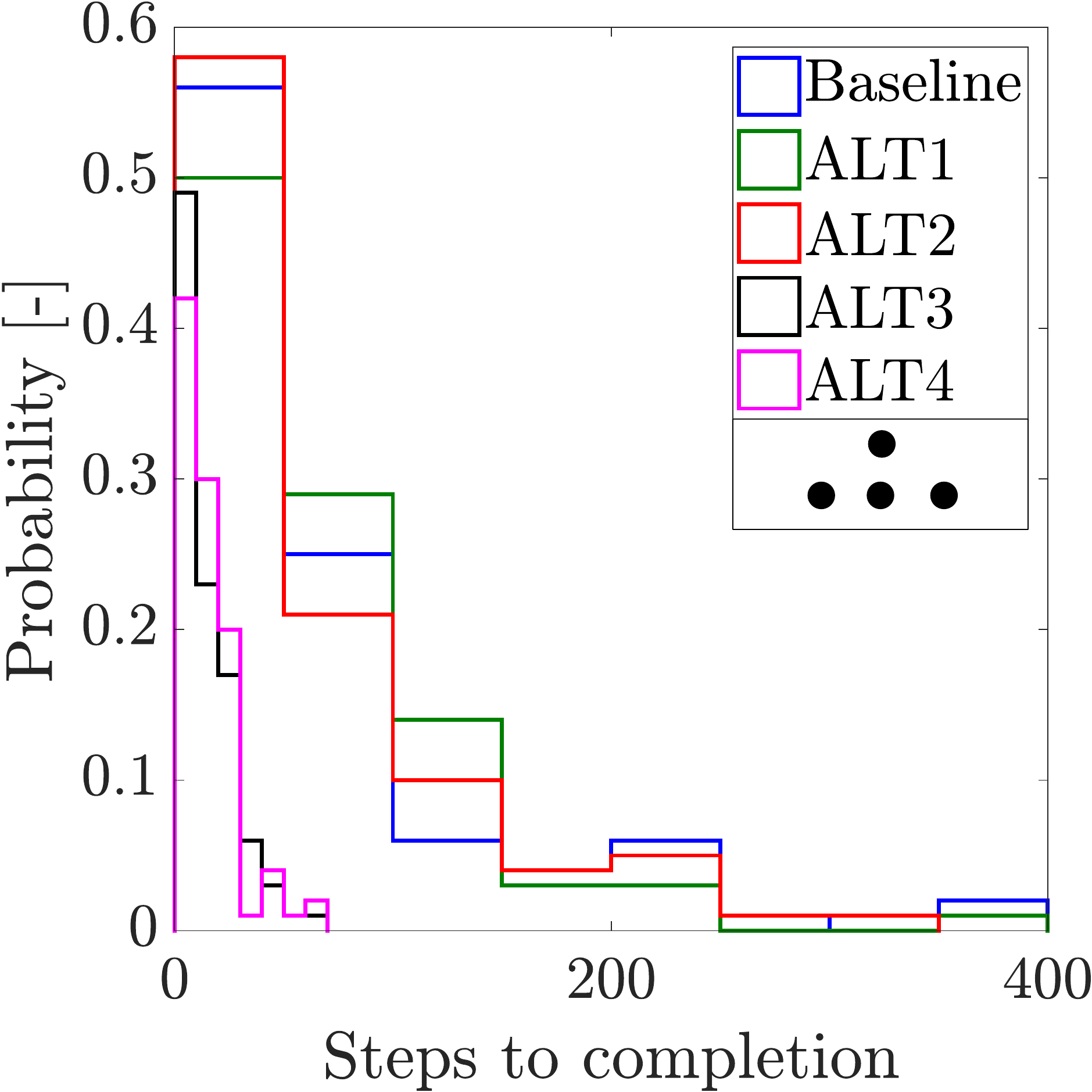}
        \caption{4 agent triangle}
        \label{fig:idealizedsimulation_shapeanalysis_steps}
      \end{subfigure}
      \hspace{5mm}
      \begin{subfigure}[t]{0.18\textheight}
        \centering
        \includegraphics[height=0.2\textheight]{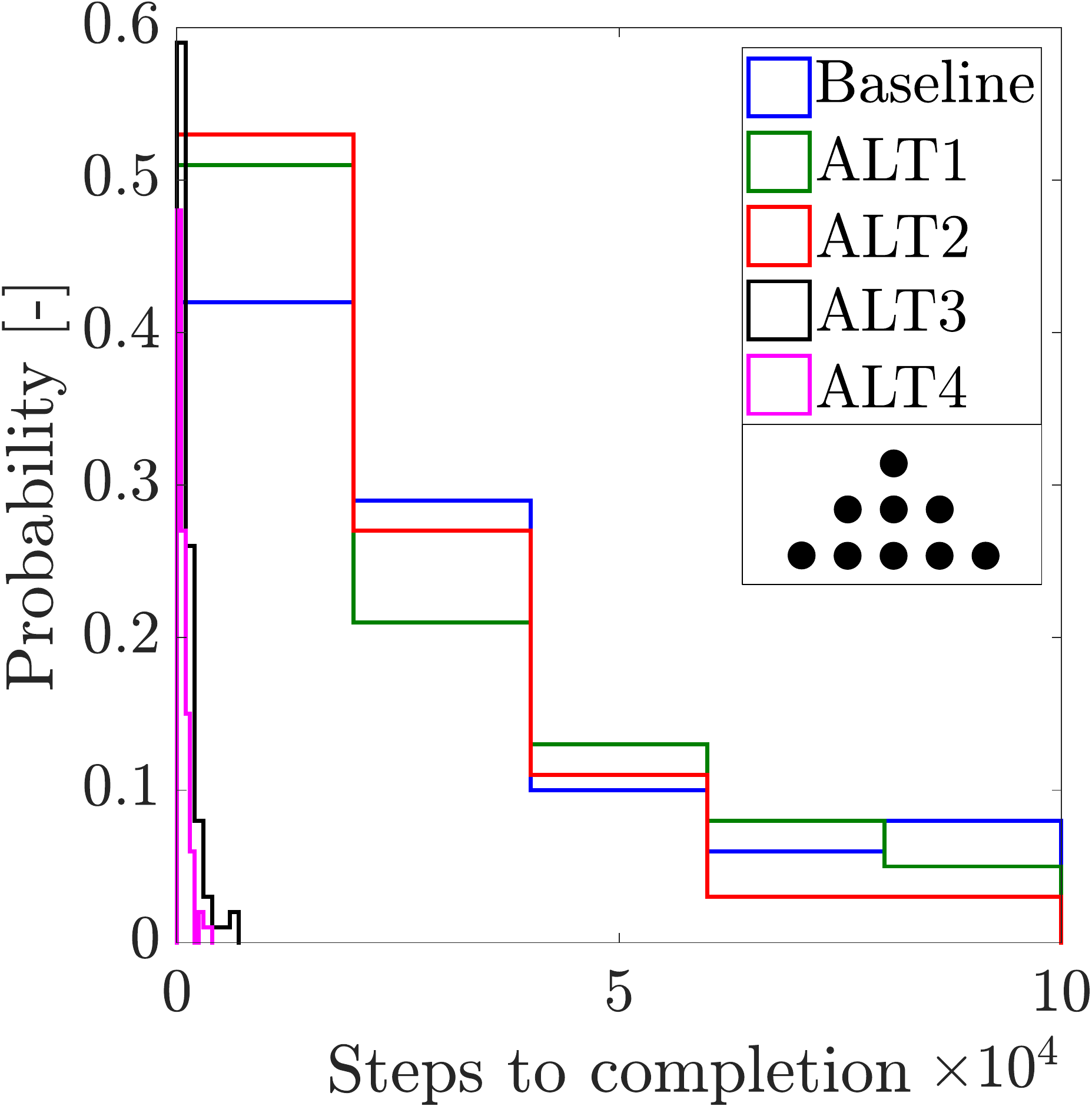}
        \caption{9 agent triangle}
        \label{fig:idealizedsimulation_shapeanalysis_steps}
      \end{subfigure}
      \hspace{5mm}
      \begin{subfigure}[t]{0.18\textheight}
        \centering
        \includegraphics[height=0.2\textheight]{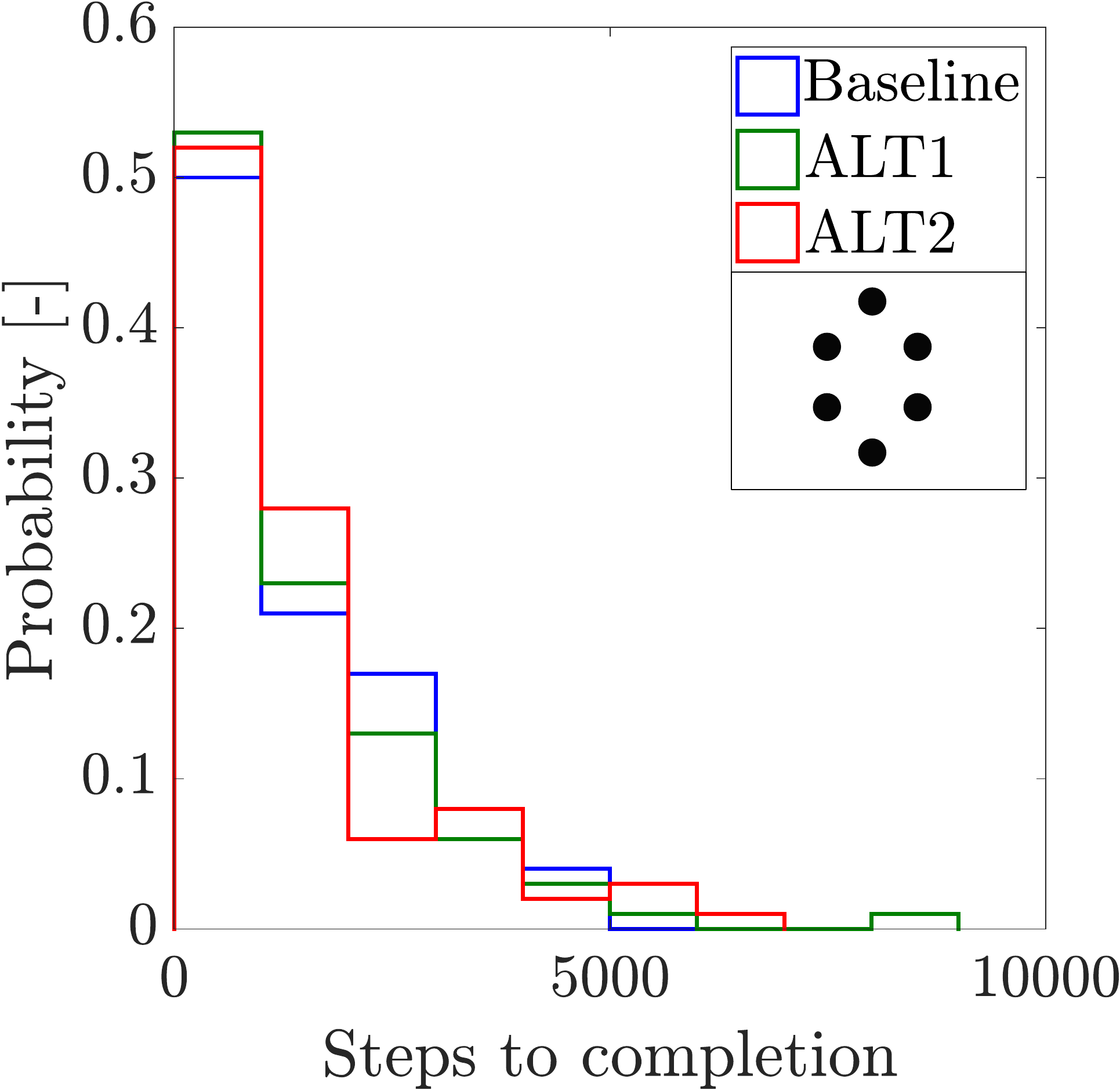}
        \caption{Hexagon}
        \label{fig:idealizedsimulation_shapeanalysis_steps}
      \end{subfigure}
      \caption{Probability distributions of steps to completion by different state-action spaces for three tested patterns}
      \label{fig:idealizedsimulations_shapeanalysis_steps}
    \end{figure}

\section{Continuous space and continuous time simulations with asynchronous agents}
\label{sec:swarmulatorsimulations}
  The discrete space and time experiments from \secref{sec:gridsimulations} were ported to a continuous environment with asynchronous agents operating in continuous time.
  The aim was to see how well the system would port to a continuous, asynchronous setting, which is not accounted for in the proofs.

  \subsection{Simulation set-up and system description}
    
    \paragraph{Agent dynamics and behavior}
    The robots behave like accelerated particles, freely moving in an unbounded 2D space, and regulate their accelerations in a North-East frame of reference (we will denote $x$ for North, and $y$ for East).
    They can sense omni-directionally all their neighbors within a radius $\rho_{sensor}$, and they can sense the motion of their neighbors with enough accuracy to determine whether they are computing an action 
    (Assumption A\ref{a:canimove}).
    Each robot determines its local state in $\mathcal{S}$ following the barriers depicted in \figref{fig:simulationbreakdown}.
    When an agent is active and none of its neighbors are taking an action, it will try and take an action itself.
    Following an alignment maneuver of time $t_{{adj}_1}$, the agent will begin to take the action, moving with commanded speed $\abs{v_{action}}$.
    The agent interrupts the action if it senses another agent being too close or also performing an action, in an attempt to approach Proposition \ref{proposition:oneatatime}.
    If an action is completed or interrupted, the agent will perform a second alignment maneuver for time $t_{{adj}_2}$ to settle into its new position.
    Using $(t_{{adj}_1},t_{{adj}_2})>0$ we can instill the same behavior introduced in  ALT1 from \secref{sec:gridsimulations}, because the agent that has just taken action will not do so while adjusting, leaving a time window for its neighbors to take action.
    Note that alignment maneuvers are minimal and are thus not perceived by other agents as actions.
    Pseudo-code for the on-board controller of the agents is provided in \algref{alg:controller}.
    \begin{figure}[t]
      \centering
      \begin{subfigure}[t]{0.16\textheight}
        \centering
        \includegraphics[height=0.15\textheight]{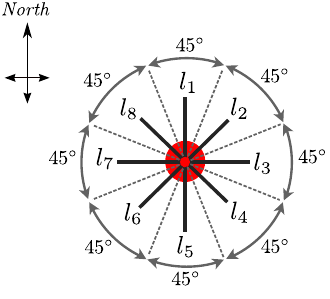}
        \caption{Rounding method used by the agents to assess their state}
        \label{fig:simulationbreakdown}
      \end{subfigure}
      \hspace{10mm}
      \begin{subfigure}[t]{0.14\textheight}
        \centering
        \includegraphics[height=0.15\textheight]{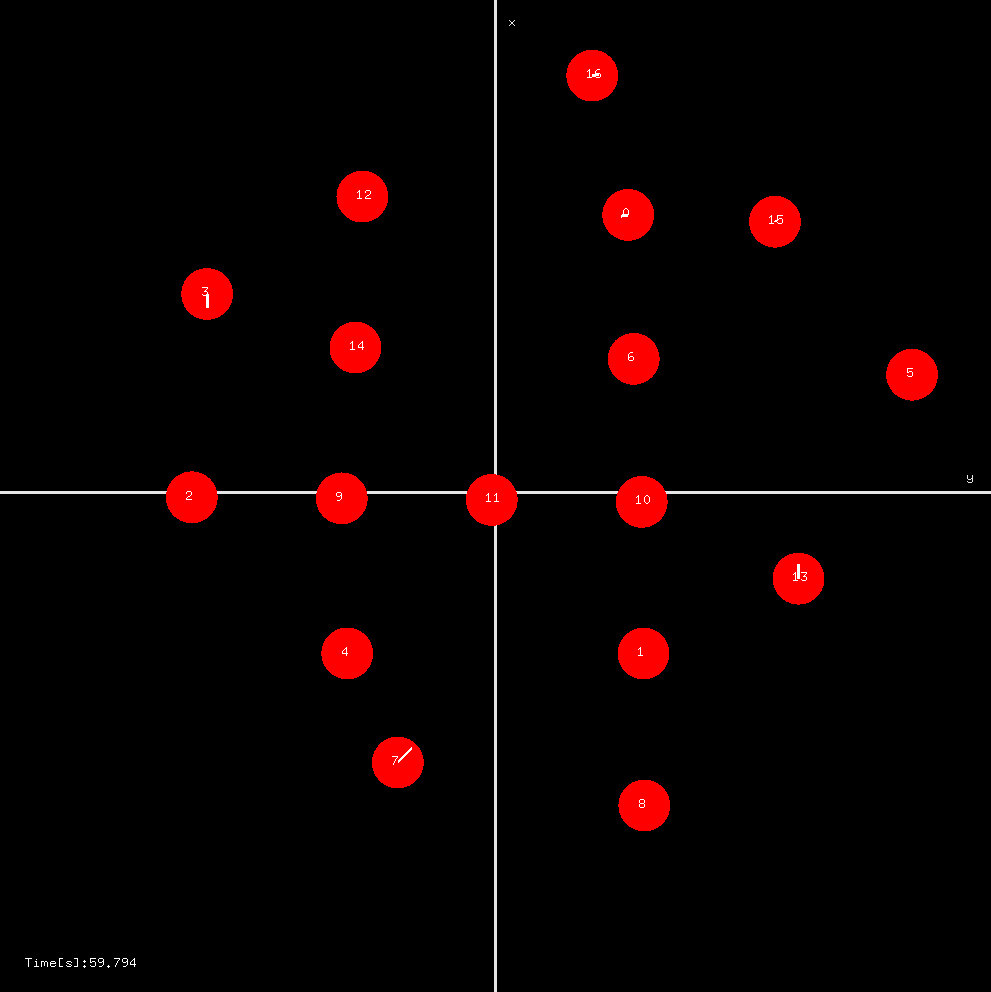}
        \caption{Several agents in action}
        \label{fig:simulationscreenshot}
      \end{subfigure}
      \hspace{10mm}
      \begin{subfigure}[t]{0.14\textheight}
        \centering
        \includegraphics[height=0.15\textheight]{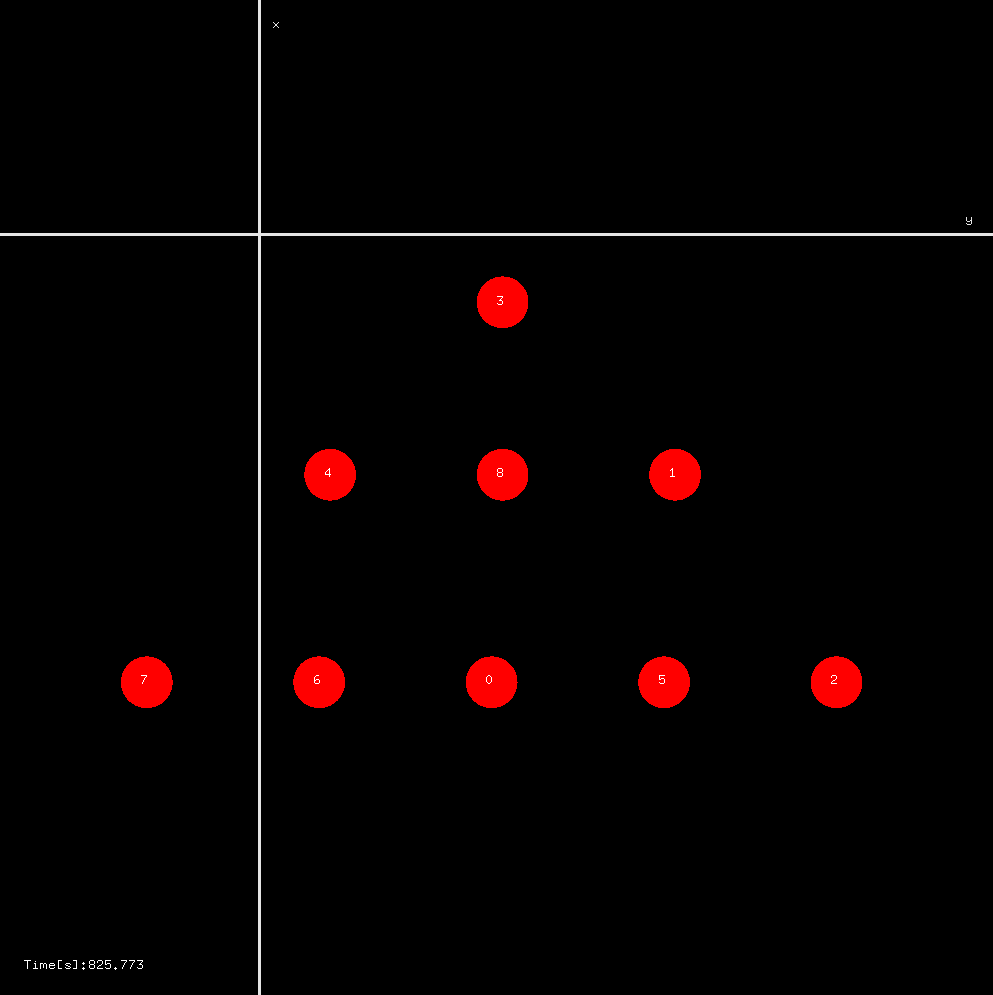}
        \caption{9 agents converged to a triangle}
        \label{fig:simulationscreenshot}
      \end{subfigure}
      \caption{State assessment (a) and two screenshots of continuous time and space simulations with asynchronous agents (b,c)}
      \label{fig:simulationfigures}
    \end{figure}

    \begin{algorithm}[h]
     \While{running}{
        Measure current relative positions of agents within $\rho_{sensor}$\;
        Determine discrete state $s$\;
        Determine whether any neighbors are taking an action\;

        \uIf{Not taking an action}{
          \uIf{All neighbors are not taking an action} {
            Adjust distance and bearing to closest neighbor(s) for $t_{{adj}_1}$\;
            \uIf{($s\in\mathcal{S}_{active}$) $AND$ (distance to all neighbors $>r_{min}$)} {
              Take an action from $\mathcal{Q}_{safe}$ with velocity $\abs{v_{action}}$\;
              Adjust distance and bearing to closest neighbor for $t_{{adj}_2}$\;
            }
          }
        }
        \uElseIf{(All neighbors are not taking an action) $AND$ (taking an action) $AND$ (distance to all neighbors $>r_{min}$)} {
          Continue action\;
        }
        \uElse {
          Stop action\;
        }
      }

     \caption{Pseudo-code for agent controller}
     \label{alg:controller}
    \end{algorithm}

    \paragraph{Distance and Bearing Adjustment Commands} 
      When not performing actions, the robots are governed by attraction-repulsion and alignment forces with respect to their neighbors.
      Consider two robots $\mathcal{R}_i$ and $\mathcal{R}_j$.
      The commanded velocity of $\mathcal{R}_i$ aligning to $\mathcal{R}_j$ along the $x$-direction (and, equivalently, $y$-direction) is given by
      \begin{equation}
        v_{{x}_{cmd_{ij}}} = (v_{r_{ij}}+v_{b_{ij}}) \cos(\beta_{ij}) - v_{b_{ij}} \cos(2\beta_{des}-\beta_{ij}).
        \label{eq:vxcmdvycmd_local}
      \end{equation}
      The first term is an attraction-repulsion term, and the second term is a bearing alignment term.
      Together, they cause $\mathcal{R}_i$ to gravitate to a specific distance and bearing ($\beta_{des}$) to its neighbor $\mathcal{R}_j$.
      $\beta_{ij}$ is the bearing to $\mathcal{R}_j$ with respect to North.
      $v_{b_{ij}}$ is a constant indicating the desired velocity of the bearing alignment.
      The velocity of the attraction-repulsion $v_{r_{ij}}$ is given by
      \begin{equation}
        v_{r_{ij}} = -k_r\frac{1}{\abs{\rho_{ij}}} + 
              \frac{1}{1+e^{-k_a(\abs{\rho_{ij}}-\rho_s)}},
        \label{eq:attrepown}
      \end{equation}
      where:
      $k_r \geq 0$ is a repulsion gain,
      $k_a \geq 0$ is an attraction gain,
      $\rho_{ij}$ is the distance that $\mathcal{R}_{i}$ measures to $\mathcal{R}_j$, and
      $\rho_s$ is a shift in the attraction term used to tune the equilibrium point to a desired distance $\rho_{des}$.
      For a given $\rho_{des}$, $k_r$, and $k_a$, one can extract $\rho_s$ for $v_{r_{ij}}=0$.
      \eqnref{eq:attrepown} has Lyapunov stability \citep{gazi2002class}.
      Two agents are in equilibrium
      ($v_{x_{{cmd}_{ij}}}=v_{y_{{cmd}_{ij}}}=v_{x_{{cmd}_{ji}}}=v_{y_{{cmd}_{ji}}}=0$)
      when $\beta_{ij}=\beta_{des}$, $\beta_{ji}=\beta_{des}\pm\pi$, and $v_{r_{ij}}=v_{r_{ji}}=0$.
      Note how the alignment between agents is reciprocal;
      such that for each $\beta_{des}$, there is also a corresponding $\beta_{des}\pm\pi$ alignment.
      This is due to the identities 
      $\sin(\beta+\pi) = -\sin(\beta)$ and 
      $\cos(\beta+\pi) = -\cos(\beta)$ which manifest themselves via \eqnref{eq:vxcmdvycmd_local}.
      Multiple alignment bearings $\beta_{des}$ can be defined, we then allow the agent to select the one that is closest to $\beta$.
      For a robot $\mathcal{R}_i$ which senses $m$ neighbors, the full alignment command in $x$ is $v_{x_{{cmd}_i}} = \sum_{j=1}^{m}v_{{x}_{cmd_{ij}}}$, and the equivalent for $y$.
      This is unless the closest neighbor is at a distance $\rho < \rho_{safe}$, where otherwise only that agent is considered.
      
    \paragraph{Simulation Parameters}
      In our tests we used:
        $\rho_{sensor}=1.6$m,
        $\rho_{des}=1$m,
        $\rho_{safe}=0.3$m,
        $t_{{adj}_1}=2$s,
        $t_{{adj}_2}=9$s,
        $k_r=1$,
        $k_a=5$,
        $v_{action}=1$m/s,
        $v_b = 10$m/s.
      The state-action set $\mathcal{Q}_{safe}$ and the active set $\mathcal{S}_{active}$ were as in ALT4 from \secref{sec:gridsimulations}.
      We provided the agents with $\beta_{des} = \{0,\pi/4,\pi/2,3\pi/4\}$, making them adjust at all bearings to each other that match the state space.
      For $\beta_{des} = \pi/4$ and $\beta_{des} = 3\pi/4$, then we define $\rho_{des}=\sqrt{2}$m instead of $\rho_{des}=1$m.
      The adjacency matrix describing how the swarm is connected is continuously computed, and a Breadth-First Search (BFS) is used to check that the swarm remains connected.
      If the swarm disconnects at any point, the simulation exits.
      Alternatively, the simulation exits once the desired pattern is achieved.
      For each pattern, simulations are repeated 50 times.

    \begin{figure}[t]
      \centering
      \begin{subfigure}[t]{0.45\textwidth}
        \centering
        \includegraphics[width=\textwidth]{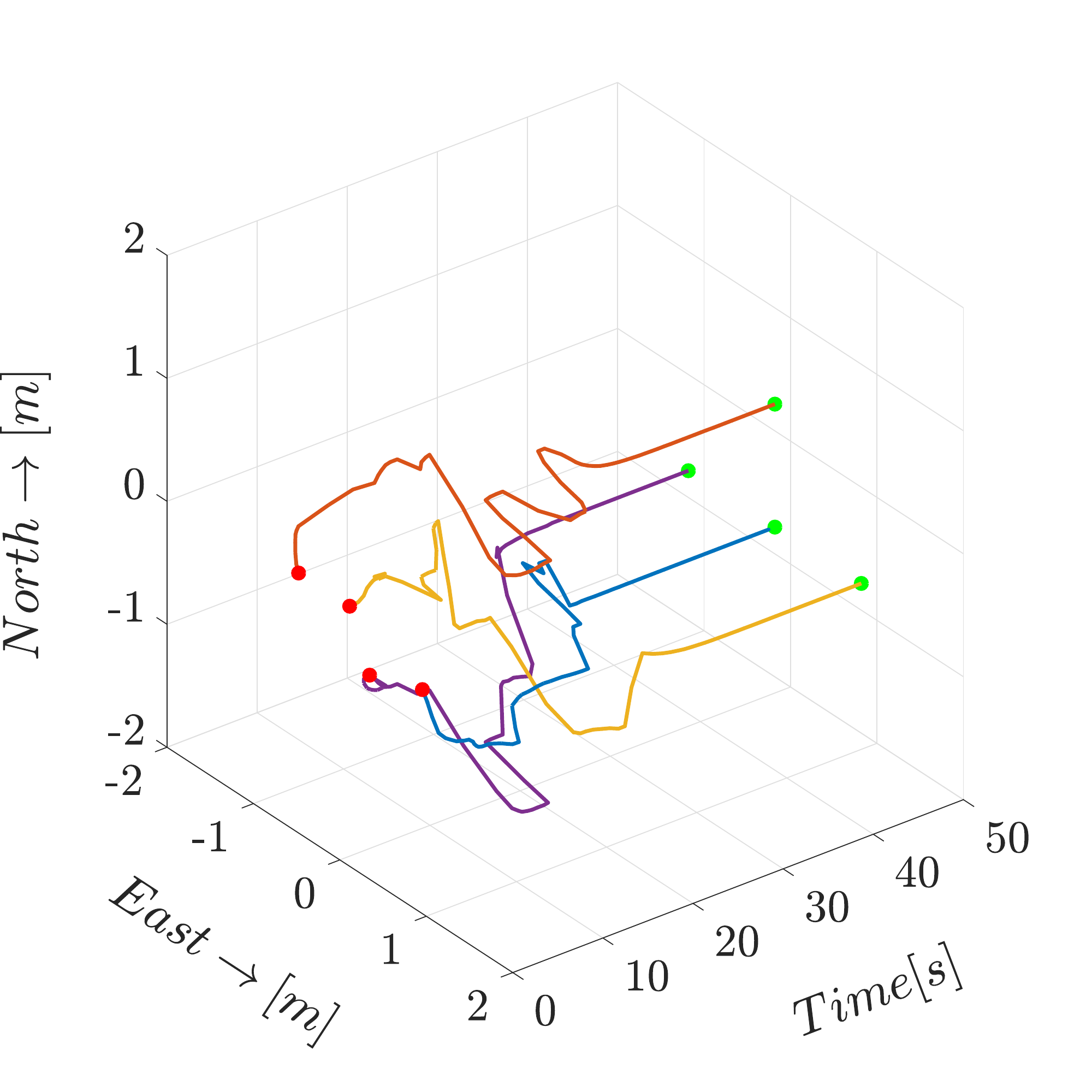}
        \caption{Triangle with 4 agents}
        \label{fig:swarmulatortrajectory_triangle4}
      \end{subfigure}
      \hspace{5mm}
      \begin{subfigure}[t]{0.45\textwidth}
        \centering
        \includegraphics[width=\textwidth]{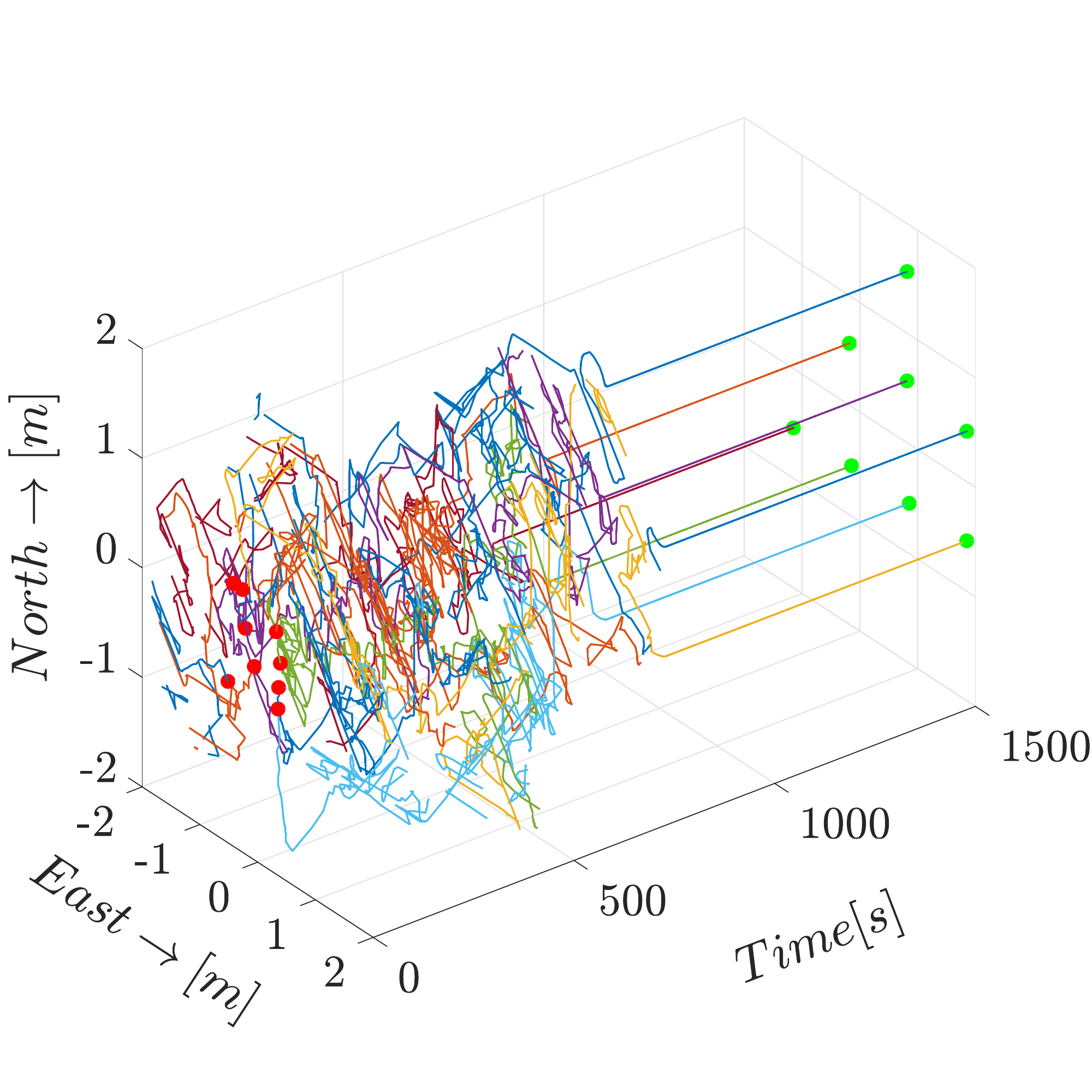}
        \caption{Triangle with 9 agents}
        \label{fig:swarmulatortrajectory_triangle9}
      \end{subfigure}
      \caption{Simulated trajectories to the desired patterns}
      \label{fig:swarmulatortrajectories}
    \end{figure}

    \begin{figure}[t]
      \centering
      \begin{subfigure}[t]{0.40\textwidth}
        \centering
        \includegraphics[width=\textwidth]{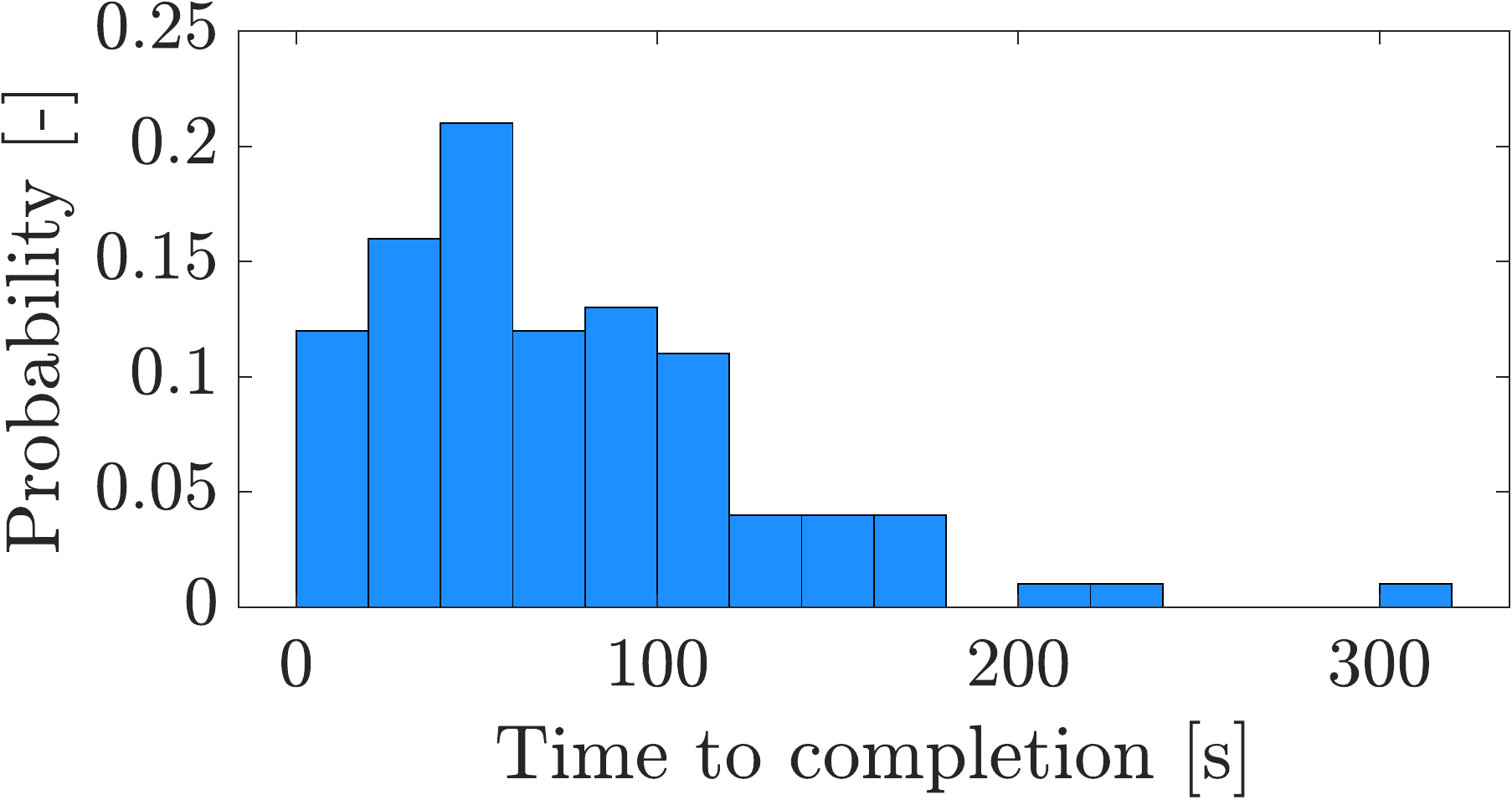}
        \caption{Triangle with 4 agents, bin width=$20$s}
        \label{fig:swarmulatortimes_triangle4}
      \end{subfigure}
      \hspace{5mm}
      \begin{subfigure}[t]{0.40\textwidth}
        \centering
        \includegraphics[width=\textwidth]{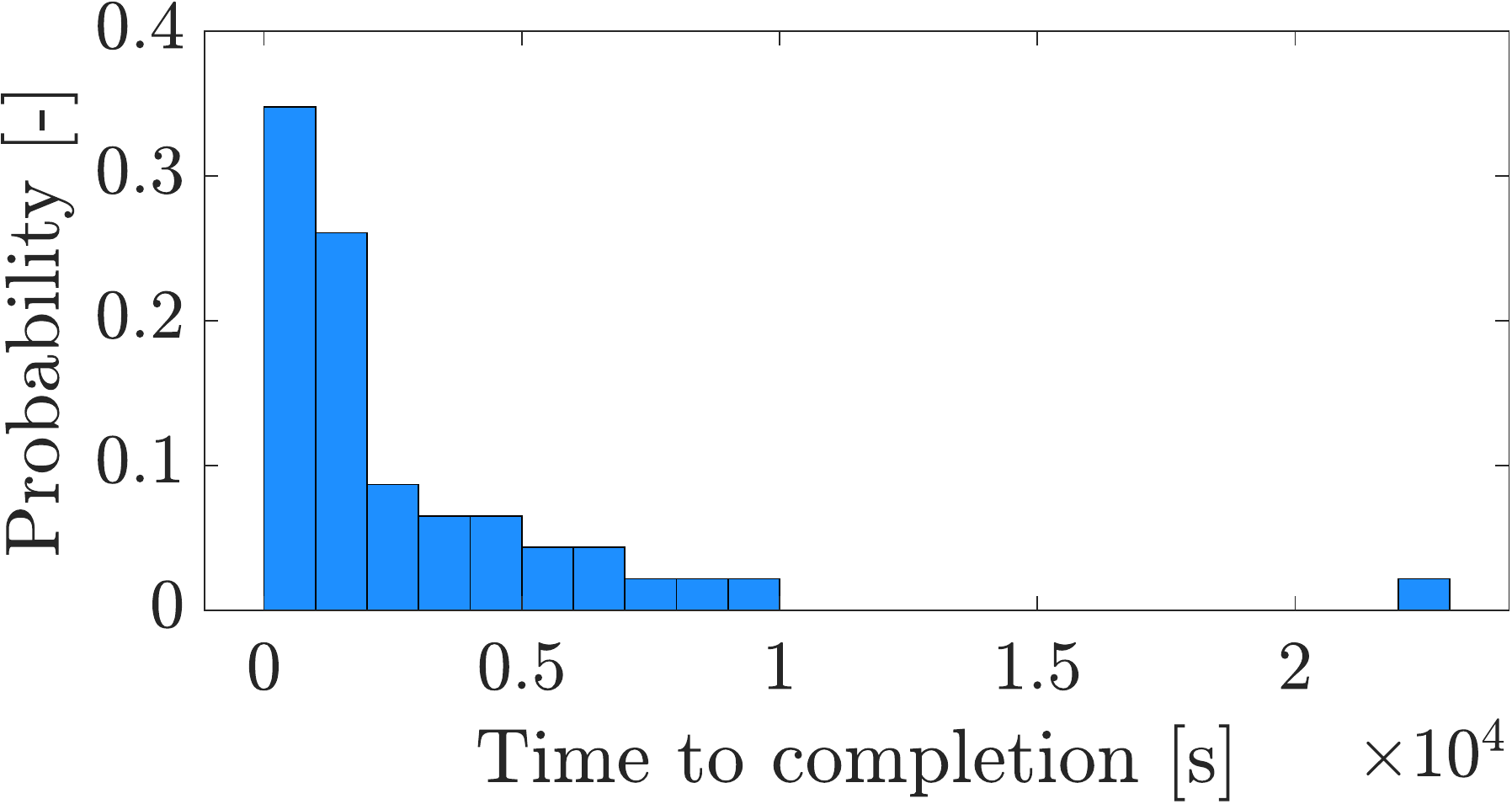}
        \caption{Triangle with 9 agents, bin width=$1000$s}
        \label{fig:swarmulatortimes_triangle9}
      \end{subfigure}
      \caption{Probability density for time to convergence based on simulation results}
      \label{fig:swarmulatortimes}
    \end{figure}
  \subsection{Simulation Results}
    The results for the triangles with 4 and 9 agents from \secref{sec:gridsimulations}, using the controller from ALT4, were validated in this continuous setting.
    \figref{fig:swarmulatortrajectories} shows sample trajectories over time.
    We can see that the agents reshuffle until the desired pattern is achieved.
   	The triangle with 4 agents was achieved successfully in 50 out of 50 trials, with generally fast convergence times (within 100 seconds of simulated time).
   	The triangle with 9 agents was achieved successfully in 49 out of 50 trials.
   	One trial experienced a separation due to the violation of the condition in Proposition \ref{proposition:oneatatime} which caused an unsafe maneuver.
   	This happened as two non-neighboring agents chose to perform an action at approximately the same time, came into each other's view, yet the alignment maneuver was such that two agents (who were the link between two parts of the swarm) moved further than $1.6$m apart, which was the limit of the sensor.
   	Although we could expect the swarm to reconnect, the issue is noted and should be tackled in future work.
   	Nevertheless, this was \emph{the only} unsafe maneuver that took place out of thousands of maneuvers executed over all 50 trials.
   	The times to completion and their probability are shown in \figref{fig:swarmulatortimes}.

\section{Discussion}
\label{sec:discussion}

  \subsection{Insight into emergent behavior of swarms}
    The approach presented in this work offers novel insights into how emergence for a fully distributed swarm with very limited agents can be achieved.
    In analogy to biological systems, the agents in the swarm merely functioned on the instincts to:
    \begin{enumerate*}
        \item \emph{be safe} (not collide with others); 
        \item \emph{be social} (not risk separation from/of the group); 
        \item \emph{be happy} (be in a set of desired local states).
    \end{enumerate*}
    The final pattern emerged as the only unique combination of ``happy'' states.
    The agents had no knowledge that the desired local states were only a piece of a larger pattern, nor did they need to care.
    This shows that an emergent global swarm behavior can be reached merely by breaking it down into its locally observable constituents.
    Similarly, the framework can be used to guarantee the \emph{lack of} emergence in case all static states cannot be arranged into any pattern. \\

    The principles in this work were aimed at pattern formation, but future work could aim to extend to other applications, such as organized navigation or task allocation.
    Additionally, the formal methodology to avoid collisions and avoid separations also has several applications on its own.
    For instance, it can be used to guarantee that a wireless sensors network would never separate in multiple groups even when faced with obstacles.
    This proof is independent of the number of agents that are added or removed to/from the system, and has empirically been shown to work in a continuous time and space realm.

  \subsection{State Explosion}
  	Our approach uses a local level analysis to determine whether a unique pattern will be achieved by the agents starting from any initial condition.
  	This is independent of the number of agents in the swarm, and it is thus free of state explosion issues.
  	Nevertheless, the approach as a whole still requires us to determine whether a desired pattern is unique, and this part was still done using a global analysis as in \secref{sec:implementation}.
  	With our implementation we aimed to mitigate a computation explosion by catching unfeasible patterns as early as possible, yet the issue remains.
    In future research, there should be efforts to further mitigate its effects for finite patterns.\\

    Three solution avenues have been identified for this problem.
    The first avenue is to focus on the agents at the border of the structure, assuming that all other agents will be enclosed by these agents.
    The second avenue is to use repeating patterns.
    The local states could be made such that the agents can arrange into infinitely repeating patterns (e.g. infinitely connecting hexagons) and create a large complex structure without defining the larger structure in full.
    The third solution avenue is to allow blocked agents that have been blocked for a long time to (temporarily) perform unsafe maneuvers, which might set the system in motion (but may come at the cost of the swarm being disconnected).

  \subsection{Achieving Complex Patterns in a Short Time}
  \label{sec:discussion_optimization}
    The results in \secref{sec:gridsimulations} indicate that, although the patterns will be achieved, it can take a significant amount of steps.
    However, our results also show that the behaviors can be tuned in order to improve performance by several orders of magnitude.
    We found that the tuning depends on the desired pattern (for instance, ALT3 and ALT4 could not be used to form a hexagon).
    This leads to the question: \emph{what is the optimum state-action mapping for a given pattern?}
    This problem could be solved using classical machine learning methodologies such as Reinforcement Learning (RL) or Evolutionary Robotics (ER).
    RL might not be well suited to the task since the agents only have partial knowledge of the environment, and are thus subjected to aliasing states \citep{kaelbling1996reinforcement}.
    Given that the system is non-Markovian, ER might be a better candidate \citep{decroon2005evolutionary}.
    In this context, the objective would be to determine the optimal alteration of the state-action mapping such that 
	  the agents still achieve their goal and 
	  their time to completion is minimized.
    To this end, the conditions expressed by Lemmas \ref{l:achievability}, Lemma \ref{lemma:activeandsimplicialpresent}, and Theorem \ref{theorem:p0pdes} allow to locally check for state-action relations that can reliably achieve the global goal.
    However, ALT3 and ALT4 have shown that certain conditions may be too restrictive.
    Future work should explore how far the restrictions can be lifted without affecting the local proof.
    Additionally, it might also be possible to improve convergence by adjusting waiting time, such that certain states wait longer than others before choosing to move, or by extending the sensor range, such that agents can take smarter actions towards desired states.
    Providing the agents with memory could be a further enhancement to the system \citep{mccallum1996reinforcement}.

  \subsection{The North Dependency}
  \label{sec:discussion_thenorthdependency}
    In this paper, we assumed all agents shared knowledge of a common direction (Assumption A\ref{a:north}).
    This can appear as a significant limitation, yet the framework presented in this paper can take absence of a North direction into account.
    The knowledge of a common direction is not essential, but it does enable an agent to differentiate between otherwise equivalent states.
    For example, consider the state 
    $s = \begin{bmatrix} 0 & 0 & 0 & 0 & 0 & 1 & 1 & 0 \end{bmatrix}$.
    Without a common direction:
    \begin{align}
      s &= \begin{bmatrix} 0 & 0 & 0 & 0 & 0 & 1 & 1 & 0 \end{bmatrix}
         \equiv \begin{bmatrix} 0 & 0 & 0 & 0 & 0 & 0 & 1 & 1 \end{bmatrix} \nonumber \\
         &\equiv \begin{bmatrix} 1 & 0 & 0 & 0 & 0 & 0 & 0 & 1 \end{bmatrix} \equiv \hdots \equiv 
         \begin{bmatrix} 0 & 0 & 0 & 0 & 1 & 1 & 0 & 0 \end{bmatrix}. \nonumber
    \end{align}
    Therefore, including a state in $\mathcal{S}_{static}$ would automatically include all rotations of that state as well.
    Even if the final pattern that is formed is still unique, neglecting North has still been shown to be a potential problem for the condition imposed by Lemma \ref{lemma:activeandsimplicialpresent}, which could be subject to local situations that cause the pattern to never emerge.
    However, if such cases are accepted (or otherwise circumvented) then Assumption A\ref{a:north} can be lifted.

\section{Conclusion and Future Work}
\label{sec:conclusion}
  This work introduced a method, complete with a proof procedure, to devise local behaviors of highly limited agents such that a unique global pattern always emerges.
  Approaching the problem from top-down, we first identify the local states that build the desired global pattern, and then check if these local states indeed lead uniquely to the desired global pattern or if the swarm can also converge to other undesired emergent solutions.
  If the desired pattern is a unique emergent solution, then we can locally prove, based on the agents' local behavior, whether the pattern is achievable and whether it can be reached without any issues from any initial pattern.
  Despite breaking down the system to a discrete state-space, and imposing the requirement in our proof that only one agent can move at once, we have shown that results can be reproduced by asynchronous agents operating in continuous time and continuous (unbounded) space.
  The methodology shown here has been used for agents in a two dimensional spatial plane.
  At its core, however, the methodology is based on the more general idea of matching local states to each other in order to synthesize a unique larger global state.
  With the correct mapping, we expect this strategy to also be applicable to systems with significantly different state and action spaces.\\

  Future work will focus on bringing this framework to real world robots.
  As pattern formation can take a long time if the agents take random actions, the first step will be to explore state-action optimization.
  This can be done either in the formal setting, by optimizing the state-action pairs directly, or in the dynamic setting, by studying how delays and waiting times in each state can affect the course of the structure.
  Optimization procedures should be such that, on average, the amount of steps to completion are minimized.
  A second step will be to explore the impact of noise and disturbances, which are inevitable in real world systems.
  A detailed study on the levels of noise and system errors that are deemed acceptable, and how to handle it, would be of very high importance.
  Of particular interest is the impact of false positives/false negative sensor readings, which may cause one of the agents to have a mistaken view of its local state.
  This may cause temporary heterogeneity in the system because that agent will not follow the rules as expected.
  This needs to be investigated. We expect that it should be possible to formally account for possible sensor errors by restricting the state-action mapping, although this could further restrict the possible patterns that can be formed while keeping the local proof intact.
  Finally, it would also be interesting to study the formation power that can be achieved when the assumption of north dependency is lifted, and the impact that this will have on the conditions of the local proof.

\bibliographystyle{spbasic}   
\bibliography{bibliography}   

\end{document}